\newcommand{\StatexIndent}[1][3]{%
  \setlength\@tempdima{\algorithmicindent}%
  \Statex\hskip\dimexpr#1\@tempdima\relax}
\let\bigopsize\bigcirc
\def\bigominus{{\scalerel*{\ominus}{\bigopsize}}}
\newtheorem{theorem}{Theorem}
\newtheorem{lemma}{Lemma}
\newtheorem{corollary}{Corollary}
\newtheorem{definition}{Definition}
\newtheorem{observation}{Observation}
\newtheorem{example}{Example}
\newcommand{\cD}{\mathcal{D}}
\newcommand{\cM}{\mathcal{M}}
\newcommand{\cF}{\mathcal{F}}
\newcommand{\cS}{\mathcal{S}}
\newcommand{\cV}{\mathcal{V}}
\newcommand{\cR}{\mathcal{R}}
\newcommand{\cC}{\mathcal{C}}
\newcommand{\cU}{\mathcal{U}}
\newcommand{\cO}{\mathcal{O}}
\newcommand{\U}{\mathsf{U}}
\newcommand{\G}{\mathsf{G}}
\newcommand{\F}{\mathsf{F}}
\newcommand{\sS}{\mathsf{S}}
\newcommand{\LTL}{\mathsf{LTL}}
\newcommand{\PLTL}{\mathsf{PLTL}}
\newcommand{\CPLTL}{\mathsf{CPLTL}}
\tikzset{%
round/.style={circle, draw=gray!60,fill=gray!5, very thick,minimum size=7mm},
dot/.style={draw, circle, minimum size=2mm,inner sep=0pt,outer sep=0pt,fill=black},
}
\algnewcommand\algorithmiccase{\textbf{case}}
\title{Temporal Causal Reasoning with (Non-Recursive) Structural Equation Models\footnote{
This is an extended version of the same title paper published in the proceeding of AAAI-25 conference \cite{Gladyshev_AAAI2025}. This version contains an additional section, in which we introduce TSEMs with arbitrary long temporal delays between causal dependencies and prove that these models are equivalent to TSEMs with 1-step delays only introduced in the original version of the paper.}
}
\author{
    Maksim Gladyshev\textsuperscript{\rm 1},
    Natasha Alechina\textsuperscript{\rm 2,1},
    Mehdi Dastani\textsuperscript{\rm 1},
    Dragan Doder\textsuperscript{\rm 1},
    Brian Logan\textsuperscript{\rm 3,1}
}
\begin{document}

\maketitle

\begin{abstract}

Structural Equation Models (SEM) are the standard approach to representing causal dependencies between variables in causal models. In this paper we propose a new interpretation of SEMs when reasoning about Actual Causality, in which SEMs are viewed as mechanisms transforming the dynamics of exogenous variables into the dynamics of endogenous variables. This allows us to combine counterfactual causal reasoning with existing temporal logic formalisms, and to introduce a temporal logic, $\CPLTL$, for causal reasoning about such structures. We show that the standard restriction to so-called \textit{recursive} models (with no cycles in the dependency graph) is not necessary in our approach, allowing us to reason about mutually dependent processes and feedback loops. Finally, introduce new notions of model equivalence for temporal causal models, and show that $\CPLTL$ has an efficient model-checking procedure. 

\end{abstract}

%

\section{Introduction}


There has recently been increased interest in causal reasoning from AI researchers and philosophers. 
The standard framework for reasoning about causal dependencies in both stochastic and deterministic settings is Structural Equation Modelling (SEM) \cite{Pearl2000,Spirtes2001}. Both types of reasoning are crucial for the field of AI: in stochastic domains it is often used in Causal Machine Learning \cite{causal_timeseries} and Causal Discovery, while in deterministic domains it forms the basis for work on Actual Causality \cite{HalpernBook}. 

However, reasoning about many real-world phenomena and their dynamics requires reasoning about time and temporal properties. As a result, temporal reasoning in, for example, Linear-time Temporal Logic ($\LTL$) \cite{Pnueli1977}, has become an important technique for reasoning about the dynamics of AI systems. Progress in this field has resulted in many theoretical results in areas such as formal verification and synthesis \cite{GorankoBook}. 

While structural equation models are the main tool for analysing cause–effect relations and have been applied in  a range of disciplines, e.g., medicine, economics, computer science and industrial engineering, they are not specifically designed for representing the temporal behaviour of a system, which play important role for causality claims in many domains. For example, the effect of some treatment may (causally) depend not only on whether the treatment was given to a patient or not, but also on temporal properties (i.e, dynamics) of this treatment, such as timing, duration and repetition of the treatment. 
Combining SEM models with temporal reasoning is therefore key in many applications. 

Previous work on combining SEM models with temporal reasoning has focused on causal discovery, and several methods for analysing time-series data with SEMs have been developed, e.g., \cite{Assaad_2022,hyvärinen2023}. In this paper, we propose an approach to temporal reasoning with SEMs for reasoning about actual causality. While a  causal model is usually understood as a static representation of causal dependencies transforming values of exogenous variables into the values of endogenous variables, we show it can also be interpreted as a causal mechanism transforming the \textit{dynamics} of exogenous changes into the \textit{dynamics} of endogenous ones. In our framework, we assume that input to a causal model is a (time) series of values assigned to the exogenous variables, which we call the `temporal context'. We give a procedure that, given a causal model as input, processes a temporal context and transforms it into a (time) series of assignments to endogenous variables. We then show how the framework of actual causality can be combined with the temporal logic $\LTL$ to give the logic $\CPLTL$, allowing us to express statements about future and past of the system, e.g., ``a fact $\varphi$ was \textit{always} true", ``a fact $\varphi$ will be true \textit{until} another fact $\psi$ is true", etc. 

Our framework has several interesting features. Firstly, interventions (necessary for counterfactual reasoning) become `time-sensitive': in temporal settings it is necessary to specify not only which intervention happens, but also \textit{when} it happens. Secondly, most existing works on actual causality only deal with \textit{recursive} causal models (models with acyclic dependency graphs). In our approach cycles in the dependency graph have a natural temporal interpretation, so they do not create technical difficulties, but instead provide useful modelling tools. Following \citet{Beckers2021eq}, we introduce new notions of (temporal) equivalence for causal models, which also covers non-recursive cases. Finally, we show that our framework has an efficient model-checking procedure. 


\section{Formal Background}

In this section we introduce the formal apparatus we use in the rest of the paper: Structural Equation Models (SEM's), also called Causal Models, used for modelling causal dependencies between events, and Linear-time Temporal Logic ($\LTL$) designed for temporal reasoning.

\subsection{Structural Equation Models}\label{sec:SEM's}

The presentation below essentially follows \cite{HalpernBook}.
Let $\mathcal{U}$ and $\mathcal{V}$ be the finite sets of \textit{exogenous} and \textit{endogenous} variables respectively. We say that $\cS = (\mathcal{U}, \mathcal{V}, \mathcal{R})$ is a \textit{signature}, where $\mathcal{R}: \mathcal{U} \cup \mathcal{V} \to 2^{\mathbb{R}}$ associates with every variable $Y\in \mathcal{U} \cup \mathcal{V}$ a
non-empty \textit{finite} set $\mathcal{R}(Y)$ of possible values, also called \textit{range} of $Y$. 


\begin{definition}[Causal model]\label{def:causalmodel} Causal Model (or SEM) over $\mathcal{S}$ is a tuple $\cM = (\mathcal{S}, \cF)$, where $\cF$ associates with every endogenous variable $X \in \mathcal{V}$ a function $$\mathcal{F}_X:\prod\limits_{Z \in (\cU \cup \cV)}\cR(Z)\to\cR(X)$$which defines the structural equation describing how the value of $X$ depends on the values of $\cU\cup\cV$. 
\end{definition}





Informally, in causal models different events are represented by the assignment of different values to abstract variables. Values of \textit{endogenous} variables depend on the values of other variables, while values of \textit{exogenous} variables are determined outside of the model. A complete assignment $(U_1=u_1, \dots, U_k=u_k)$ of $\cU$ is called a \textit{context} and denoted $\vec{u}$. A pair $(\cM, \vec{u})$ is called \textit{causal setting}. 

\begin{example}[Rocks]\label{exm:rocks}  Suzy and Billy both pick up rocks and throw them at a bottle (encoded as ST=1 and BT=1 respectively). Both throws are perfectly accurate, so the bottle shatters (BS=1) whenever ST=1 or BT=1.
\end{example}

Because all the variables are binary here, for simplicity we write $ST$ and $\neg ST$ instead of ST=1 and ST=0 respectively. It is assumed that exogenous variables $U_{ST}$ and $U_{BT}$ determine values of $ST$ and $BT$: $ST:=U_{ST}$, $BT:=U_{BT}$. The structural equation for $BS$ is $BS:= ST\vee BT$. It is often convenient to represent the structure of the model as a dependency graph. Nodes in the graph represent variables, and directed edges represent (direct) dependencies among variables.

\begin{figure}[h!]
\centering
    \begin{tikzpicture}
    \node[dot] [label=right:{BS}] (BS) {};
    \node[dot] [label={0:ST}]  (ST) [above left=.25cm and .8cm of BS] {};
    \node[dot] [label={0:BT}] (BT) [below left=.25cm and .8cm of BS] {};
    \node[dot] [label=left:{UST}] (UST) [left=of ST] {};
    \node[dot] [label=left:{UBT}] (UBT)  [left=of BT]    {};
    \draw[-latex] (UST) -- (ST);
    \draw[-latex] (UBT) -- (BT);
    \draw[-latex] (ST) -- (BS);
    \draw[-latex] (BT) -- (BS);
    \end{tikzpicture}
\caption{A dependency graph for \Cref{exm:rocks}.}
\label{fig:pre:rockexample}
\end{figure}
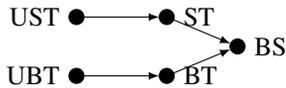

A model $\cM$ is \textit{recursive} if there exists a partial order $\preccurlyeq$ on
$\cV$, such that unless $X_2\preccurlyeq X_1$, $X_1$ is independent of $X_2$ \cite{HalpernBook}. As a result, a dependency graph of a recursive model is a directed acyclic graph. Recursiveness also guarantees that given $\vec{u}$, the set of structural equations has a unique solution, i.e., a unique assignment of $\cV$. For this reason,  many papers on actual causality consider recursive causal models only (e.g., \cite{Halpern2005a,Beckers2018}). As we show later, this restriction recursive models is not necessary in our approach.

The last ingredient we need for reasoning about actual and counterfactual courses of events are  \textit{interventions}. An intervention $[\vec{X}\leftarrow \vec{x}]\varphi$, meaning ``after fixing the values of $\vec{X}\subseteq \cV$ to $\vec{x}$, $\varphi$ holds", results in a new casual model denoted $\cM_{\vec{X}\leftarrow\vec{x}}$. $\cM_{\vec{X}\leftarrow\vec{x}}$ is the model $\cM$ where functions $\cF_X$ for any $X\in\vec{X}$ are replaced with a constant function $\cF'_X$, which always returns $x^*$, where $X=x^*\in \vec{X}\leftarrow\vec{x}$ and the remaining functions remain unchanged.

Note that
$\vec{X}$ abbreviates $\{X_1, \dots, X_k\}$; $\vec{X}=\vec{x}$ abbreviates $\{X_1 =x_1, \dots, X_k=x_k\}$; $\vec{X}\leftarrow \vec{x}$ abbreviates $\{X_1 \leftarrow x_1, \dots, X_k\leftarrow x_k\}$. Sometimes we slightly abuse this notation and write $X = x \in \vec{X}\leftarrow\vec{x}$ instead of $X \leftarrow x \in \vec{X}\leftarrow\vec{x}$.

In our example in a context $\vec{u}=(U_{ST}=1, U_{BT}=1)$, where both Suzy and Billy throw their rocks, formulas $ST=1$, $BT=1$, $BS=1$ are true. At the same time, interventions allow us to formulate statements, such that $[ST\leftarrow 0]BS=1$, meaning ``if Suzy does not throw the rock, the bottle still shatters" or $[ST\leftarrow 0, BT\leftarrow 0]\neg (BS=1)$, meaning ``if neither Suzy nor Billy throw the rock, then the bottle is not shattered". Thus, interventions provide us all the necessary machinery for counterfactual reasoning about SEM's.


\begin{definition}[Syntax] The grammar of the basic causal language is defined as follows:
 \[\varphi::=  [\vec{Y}\leftarrow \vec{y}]\psi \mid \neg\varphi\mid \varphi\wedge\varphi\]
    \[\psi::= (X=x) \mid \neg \psi \mid \psi\wedge \psi,  \]
where $X\in \cU\cup\cV$, $x\in\cR(X), \vec{Y}\subseteq \cV$ and $\vec{y}\in \cR(\vec{Y})$. Note that $\vec{Y}\leftarrow \vec{y}$ may be empty, so we write $\varphi$ instead of $[]\varphi$. 
\end{definition}





The \textit{truth} relation $(\cM, \vec{u}) \Vdash \varphi$, meaning that a causal formula $\varphi$ is true in
a causal setting $(\cM, \vec{u})$, is defined inductively as follows:\\
$(\cM, \vec{u})\Vdash (X=x)$ iff $(X=x)\in Sol(\vec{u})$\footnote{Here, $Sol(\vec{u})$ denotes the unique solution of the equations
in $\cM$ in context $\vec{u}$ (existing by the recursiveness of $\cM$). 
};\\
$(\cM, \vec{u})\Vdash \neg\varphi$ iff $(\cM, \vec{u})\nVdash \varphi$; \\
$(\cM, \vec{u})\Vdash (\varphi\wedge\psi)$ iff $(\cM, \vec{u})\Vdash\varphi$ and $(\cM, \vec{u})\Vdash\psi$; \\
$(\cM, \vec{u})\Vdash [\vec{Y}\leftarrow \vec{y}]\varphi$ iff  $(\cM_{\vec{Y}\leftarrow \vec{y}}, \vec{u})\Vdash \varphi$.

\subsection{Linear-time Temporal Logic}

Now we introduce some basics of Linear-time Temporal Logic ($\LTL$), for an extensive overview see \cite{GorankoBook}. In this paper we use both future and past $\LTL$ operators, so we call it $\PLTL$, and it contains four basic modalities: $\bigcirc\varphi$ meaning ``$\varphi$ will be true in the next moment", $\varphi\U\psi$ meaning ``$\varphi$ will be true until $\psi$", $\bigominus\varphi$ meaning ``$\varphi$ was true in the previous moment" and $\varphi\sS\psi$ meaning ``$\varphi$ is true since $\psi$". The only difference of our approach from the standard $\PLTL$ definitions is  that we use atomic expressions $(X=x)$ generated by a given signature $\cS$ instead of atomic propositions $Prop=\{p, q, \dots\}$.

\begin{definition}[$\PLTL$ syntax]\label{def:syntaxPLTL} Given a signature $\cS$, $\PLTL$ syntax is defined as:
\[\varphi ::= (X=x) \mid \neg \varphi \mid \varphi\wedge \varphi \mid \bigcirc\varphi \mid \varphi\U\varphi \mid \bigominus\varphi\mid \varphi\sS\varphi,\]
where $X\in \cV, x\in \cR(X)$. 
\end{definition}

We use standard abbreviations for
other Boolean connectives, together with derived operators $\F
\varphi \equiv \top\U\varphi$ for \textit{eventually}; $\G \varphi\equiv \neg \F\neg\varphi$ for \textit{always} in future; $\mathsf{P}\varphi\equiv \top\sS\varphi$ for \textit{sometime} in the past;  $\mathsf{H}\varphi\equiv \neg \mathsf{P}\neg\varphi$ for \textit{always} in the past.  We refer to the fragments of $\PLTL$ without $\{\bigominus\varphi, \varphi\sS\psi\}$ operators  and without $\{\bigcirc\varphi, \varphi\U\psi \}$ operators as to $\LTL$ and \textit{pure-past} $\LTL$ respectively. We also write $\bigcirc^n$ to abbreviate $\bigcirc$ nested $n$ times. 
The models of $\PLTL$ are infinite sequences of complete assignments to the variables in $\cV$. 

\begin{definition}[Linear model]\label{def:linear_model} For a given signature $\cS$, a linear model is an infinite sequence of assignments of all endogenous variables, i.e.,
\[\sigma: \mathbb{N} \to \prod\limits_{X\in \cV} \cR(X)\]
\end{definition} 


\begin{example}[Treatment]\label{exm:treatment} Suppose a patient is ill, and a medication exists. But this medication works only if it is given twice, on two consecutive days. The patient is recovered ($R=1$) at step $i$ if and only if the  treatment was given $(T=1)$ at two consecutive previous steps $i-1$ and $i-2$. Once the patient is recovered, they remain so.
\end{example}

Let us fix $\cS = (\cU, \cV, \cR)$, with $\cV = \{T, R\}$ ($T$ stands for Treatment and $R$ stands for Recovery) and all variables are binary.
Consider two linear models over $\cS$:\\
$\sigma_1 = ((T\neg R), (\neg T\neg R), (T\neg R), (\neg T\neg R), (T\neg R), \dots)$\\
and \\
$\sigma_2 = ((\neg T\neg R), (T\neg R), (T\neg R), (\neg TR), (\neg TR), \dots)$

The first model $\sigma_1$ depicts a situation, when the treatment is given at every even time moment $i$ (including $i=0$) and the patient is never recovered. In the second model the treatment is given twice in a row, at steps $i=1$ and $i=2$, and the patient recovers at $i=3$. Given a model and a time moment, $\PLTL$ logic allows us to express various facts about the past, the present and the future with respect to this time moment. 

\begin{definition}[$\PLTL$ Semantics]\label{def:semanticsPLTL} Given a linear model $\sigma$, a position $i\in \mathbb{N}$ and a formula $\varphi\in \PLTL$, we define the
\emph{truth} relation $\vDash_{\PLTL}$ inductively as follows\\
$(\sigma, i)\vDash_{\PLTL} (X=x)$ iff $(X=x)\in \sigma(i)$;\\
$(\sigma, i)\vDash_{\PLTL} \neg\varphi$ iff $(\sigma, i)\nvDash_{\PLTL} \varphi$; \\
$(\sigma, i)\vDash_{\PLTL} (\varphi\wedge\psi)$ iff $(\sigma, i)\vDash_{\PLTL}\varphi$ and $(\sigma, i)\vDash_{\PLTL}\psi$; \\
$(\sigma, i)\vDash_{\PLTL} \bigcirc\varphi$ iff $(\sigma, i+1)\vDash_{\PLTL} \varphi$;\\
$(\sigma, i)\vDash \varphi\U\psi$ iff there exists $j\geq 0$ such that $(\sigma, i+j)\vDash_{\PLTL} \psi$ and for all $0\leq k < j: (\sigma, i+k)\vDash_{\PLTL} \varphi$;\\
$(\sigma, i)\vDash_{\PLTL} \bigominus\varphi$ iff  $i\geq 1$ and $(\sigma, i-1)\vDash_{\PLTL} \varphi$;\\ 
$(\sigma, i)\vDash_{\PLTL} \varphi\sS\psi$ iff  $\exists k$ with $0\leq k \leq i$ such
that $(\sigma, k)\vDash_{\PLTL} \psi$ and $\forall j$ with $k < j \leq i: (\sigma, j)\vDash_{\PLTL} \varphi$.
\end{definition}

 The following expressions are true about \Cref{exm:treatment}:\\
\textbullet\ $(\sigma_1, 0)\vDash_{\PLTL} \G (\bigominus T \to (\neg T \wedge \bigcirc T)\wedge \neg R)$, meaning that at $(\sigma_1, 0)$  ``it will always be the case that if the treatment was given yesterday, then it is not given today, but will be given again tomorrow, and the patient will never recover";\\
\textbullet\ $(\sigma_2, 0)\vDash_{\PLTL} \F((\bigominus (T=1) \wedge \bigominus^2 (T=1)) \wedge \G(R=1))$, meaning that at $(\sigma_2, 0)$  ``eventually it will be the case that the treatment is given yesterday and two days ago, and from that moment forward the patient will be recovered".

\begin{definition}[Periodic model]\label{def:periodicmodel} A linear model $\sigma$ is \emph{ultimately periodic} if $\exists i,l>0$ such that $\sigma(k) = \sigma(k+l)$ for every $k\geq i$. We call the (possibly empty) finite sequence $\sigma(0), \dots, \sigma(i-1)$ the \emph{prefix} of $\sigma$ and $\sigma(i),\dots, \sigma(i+l)$ the \emph{loop} of $\sigma$ and say that $\sigma$ is of type $(i,l)$. 
\end{definition} 

An ultimately periodic model can be represented by a finite sequence $\vec{v}_1, \dots, (\vec{v})_{i+l}$ of the assignments of $\cV$. 

\section{Temporal Interpretation of Causal Models}


In order to proceed we need to modify some of the definitions presented already. Firstly, we need to adjust the idea of contexts. Note that normally, the context $\vec{u}$ is understood as an assignment of all exogenous variables \cite{HalpernBook}. But in our setting, we want to consider contexts as a (time) series of such assignments describing how the values of exogenous variables evolve over time. 

\begin{definition}[Temporal context] A \emph{temporal} context $\overset{\twoheadrightarrow}{u}$ is an infinite sequence of complete assignments of $\cU$:
\[\overset{\twoheadrightarrow}{u}: \mathbb{N} \to \prod_{U\in \cU} \cR(U).\] We denote a particular time instance of $\overset{\twoheadrightarrow}{u}$ as $\overset{\twoheadrightarrow}{u}(n)$.
\end{definition}

We also need to adjust the definition of interventions. In our framework it is essential to specify not only which interventions take place, but also \textit{when}. We extend the notation to make interventions time sensitive and, instead of $Y\leftarrow y$, we use $Y(n)\leftarrow y$, where $n\in \mathbb{N}$, which means that we intervene on $Y$ with value $y$ at time step $n$. For multiple interventions, we use the notation $\vec{Y}(\vec{n})\leftarrow \vec{y} = (Y_1(n_1)\leftarrow y_1, \dots, Y_k(n_m)\leftarrow y_k)$. Note that we allow the same variable $Y'$ to occur in $\vec{Y}(\vec{n})\leftarrow \vec{y}$ multiple times, meaning that we can intervene on the same variable at multiple time moments. 


Given $\cM$ describing causal dependencies between the variables, and $\overset{\twoheadrightarrow}{u}$ describing how the values of exogenous variables evolve over time, we want to understand how the values of \textit{endogenous} variables evolve over time. We represent this evolution as an (infinite) sequence $\cC = (\vec{v_0}, \vec{v_1}, \dots)$ called a  \textit{computation}. 
First we define a \textit{call} to $\cF$. Let $\vec{u'}$ and $\vec{v'}$ be complete assignments of $\cU$ and $\cV$ respectively. We say that a \textit{call} to $\cF$ with $(\vec{u'}, \vec{v'})$ takes $(\vec{u'}, \vec{v'})$ and returns $v'' = \prod_{X\in \cV}\cF_X(\vec{u'}, \vec{v'})$. A \textit{computation} $\cC$ starts with a \textit{default} assignment $\vec{v}$ of $\cV$ (representing `initial' configuration of endogenous values)\footnote{The idea to use default assignments of $\cV$ was also discussed in the context of non-recursive models in \cite[Ch.2.7]{HalpernBook}} 
and evolves as a process of iterative calls to $\cF$, using values of $\cU\cup\cV$ from the previous step \footnote{The idea of a computation $\cC$ is adapted from \cite{Gladyshev_ECAI2023}.}. 

\begin{definition}[Computation]\label{def:computation} Given a tuple $(\cM, \overset{\twoheadrightarrow}{u}, \vec{v})$, a computation $\cC$ over $(\cM, \overset{\twoheadrightarrow}{u}, \vec{v})$ is a function mapping $\mathbb{N}$ to the complete assignments $\prod\limits_{V\in \cV} \cR(V)$ of endogenous variables, such that $\cC(\cM, \overset{\twoheadrightarrow}{u}, \vec{v})(0) := \vec{v}$ and for all $i>0$, $$\cC(\cM, \overset{\twoheadrightarrow}{u}, \vec{v})(i) := \prod_{X\in \cV}\cF_X\bigl(\overset{\twoheadrightarrow}{u}(i-1), \cC(\cM, \overset{\twoheadrightarrow}{u}, \vec{v})(i-1)\bigr)$$
We use the short notation $\cC(i)$ instead of $\cC(\cM, \overset{\twoheadrightarrow}{u}, \vec{v})(i)$ if $(\cM, \overset{\twoheadrightarrow}{u}, \vec{v})$ is clear from the context.
\end{definition}

Because any $\cC(i)$ is a vector of values whose coordinates are indexed by the elements of $\cV$, we write $\cC(i)|_X$ to refer to $X$'s value at the $i$'th step of $\cC$. An intervention $int=\vec{Y}(\vec{n})\leftarrow \vec{y}$  results in an updated computation $\cC^{\vec{Y}(\vec{n})\leftarrow \vec{y}}$, defined as follows. Given a default assignment $\vec{v}$, let $\vec{v}^{int}$ be an assignment of $\cV$, which agrees with $\vec{v}$ everywhere, except the variables $Y$, such that $Y(0)\leftarrow y$ occurs in  $\vec{Y}(\vec{n})\leftarrow \vec{y}$. The values of those variables in $\vec{v}^{int}$ are set according to $\vec{Y}(\vec{n})\leftarrow \vec{y}$.


\begin{definition}[Updated Computation]\label{def:updatedcomputation} Given an intervention $int=\vec{Y}(\vec{n})\leftarrow \vec{y}$ and $(\cM, \overset{\twoheadrightarrow}{u}, \vec{v})$, an \emph{updated} computation $\cC^{int}$ is defined as $\cC^{int}(0)=\vec{v}^{int}$, and $\forall i>0\forall X\in \cV$: 
$$\cC^{int}(i)|_X=\begin{cases}
			x', \text{ if } X(i)\gets x' \in \vec{Y}(\vec{n})\leftarrow \vec{y}\\
            \cF_X(\overset{\twoheadrightarrow}{u}(i-1), \cC^{int}(i-1)), \text{ otherwise}
		 \end{cases}$$
\end{definition}


Simply speaking, an updated computation $\cC^{int}$ replaces the values of variables from $int$ on the corresponding steps.

Recall \Cref{exm:rocks} and let $\overset{\twoheadrightarrow}{u} = (00,10,00,01,00 \dots)$, so the generated computation for $\overset{\twoheadrightarrow}{u}$ and $\vec{v} = 000$ (here we write 000 instead of (ST=0, BT=0, BS=0)) is $\cC = (000, 000, 100, 001,010,001, \dots)$. Suzy throws (ST=1) at step 2 and Billy (BT=1) at step 4 (we start counting from 0). Then, we can say that the $\LTL$ formula $\bigcirc^2(ST=1) \wedge \bigcirc^3(BS=1) \wedge \bigcirc^4(BT=1)$ is true at $\cC(0)$. And counterfactually, $\bigcirc(BS=1)$ is true in $\cC^{BT(0)\leftarrow 1(0)}$.\footnote{You may notice that in this computation the bottle shatters at step 3 due to Suzy's throw, then BS=0 happens again at step 4 because both (ST=0, BT=0) hold at $\cC(3)$, then BS=1 holds again at $\cC(5)$ due to Billy's throw. This is an artifact of structural equations being defined in a specific way. We discuss how this can be fixed below.}

As can be seen, our approach allows us to merge causal time-sensitive interventions with the machinery of $\PLTL$. We call this logic \textit{Causal LTL with Past} ($\mathsf{CPLTL}$).

\begin{definition}[Syntax of $\mathsf{CPLTL}$]\label{def:syntaxCPLTL}  The grammar of $\CPLTL$ is defined as follows:
   \[\varphi::=  [\vec{Y}(\vec{n})\leftarrow \vec{y}]\psi \mid \neg\varphi\mid \varphi\wedge\varphi,\]


    where $\psi\in \PLTL$ (\Cref{def:syntaxPLTL}) and $\vec{Y}(\vec{n})\leftarrow \vec{y} = (Y_1(n_1)\leftarrow y_1, \dots, Y_k(n_m)\leftarrow y_k)$, such that $Y_i\in \cV$, $y_i\in \cR(Y_i)$, $n_i\in \mathbb{N}$ and for any $Y_i(n_i)\leftarrow y_i, Y_j(n_j)\leftarrow y_j \in \vec{Y}(\vec{n})\leftarrow \vec{y}$, $Y_i=Y_j$ implies $n_i\neq n_j$. $\vec{Y}(\vec{n})\leftarrow \vec{y}$ may be empty, in this case we write $\psi$ instead of $[]\psi$. We use the same abbreviations for boolean connectives and temporal operators as in $\PLTL$. 
\end{definition}

In contrast to static causal reasoning, where formulas are evaluated wrt a causal setting $(\cM, \vec{u})$, to evaluate $\CPLTL$ formulas, we need to know a causal model $\cM$, a temporal context $\overset{\twoheadrightarrow}{u}$, a default assignment $\vec{v}$, and a time moment $t$. We call $(\cM, \overset{\twoheadrightarrow}{u}, \vec{v})$ a causal scenario. Note that $(\cM, \overset{\twoheadrightarrow}{u}, \vec{v})$ together with an intervention $\vec{Y}(\vec{n})\leftarrow \vec{y}$ produces a computation $\cC^{\vec{Y}(\vec{n})\leftarrow \vec{y}}$ according to \Cref{def:updatedcomputation}, which is a linear model in the sense of \Cref{def:linear_model}, used to define the semantics of the $\PLTL$ fragment.

\begin{definition}[Semantics of $\CPLTL$]\label{def:semanticsCPLTL}
   Given a causal scenario $(\cM, \overset{\twoheadrightarrow}{u}, \vec{v})$, $t\in\mathbb{N}$ and $\varphi\in \CPLTL$ we define truth relation $(\cM, \overset{\twoheadrightarrow}{u}, \vec{v}),t\vDash \varphi$ inductively as follows:\\
   \\
    $(\cM, \overset{\twoheadrightarrow}{u}, \vec{v}), t\vDash [\vec{Y}(\vec{n})\leftarrow \vec{y}]\psi$ iff  $(\cC^{\vec{Y}(\vec{n})\leftarrow \vec{y}}, t)\vDash_{\PLTL} \psi$,\\
    where $\vDash_{\PLTL}$ is introduced in \Cref{def:semanticsPLTL};\\
     $(\cM, \overset{\twoheadrightarrow}{u}, \vec{v}), t\vDash \neg \varphi$ iff  $(\cM, \overset{\twoheadrightarrow}{u}, \vec{v}), t\nvDash \varphi$;\\
     $(\cM, \overset{\twoheadrightarrow}{u}, \vec{v}), t\vDash \varphi\wedge \chi$ iff  $(\cM, \overset{\twoheadrightarrow}{u}, \vec{v}), t\vDash \varphi \text{\&}  (\cM, \overset{\twoheadrightarrow}{u}, \vec{v}), t\vDash \chi$.
\end{definition}

\paragraph{Non-Recursiveness}

The temporal approach to SEM's proposed above not only allows to deal with non-recursive models without additional technical adjustments, but also often provides more elegant ways to describe the desired temporal behaviour of the system. 

Consider \Cref{exm:treatment} again. To model this scenario, we want our model $\cM$ to contain $\cV = \{T, R\}$ (for Treatment and Recovery), such that (1) $R=1$ once the treatment is given twice in a row, and (2) once $R=1$, it remains so. Let variable $U (\cR(U)=\{0, 1\})$ represent whether the treatment is given in a given moment. And let $\cR(T)= \{0, 1\}, \cR(R)= \{0, \frac{1}{2}, 1\}$, where T=1 means the treatment is given. We want to define our structural equations in such a way that R=0 if the treatment is not given on the previous step, R=$\frac{1}{2}$ if the treatment was given once, and R=1 if the treatment was given twice in a row. Additionally, we require that if the patient is recovered, he must remain so.

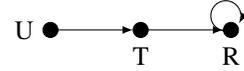
\begin{figure}[h]
\centering
\begin{tikzpicture}
  \node[dot] [label=left:{U}] (U){};
    \node[dot] [label=below:{T}] (T) [right=of U]{};
    \node[dot] [label=below:{R}] (R) [right=of T]{};

    \draw[-latex](U) -- (T);
    \draw[-latex](T) -- (R);
    \draw[-latex] (R) arc(290:-30:.2);
\end{tikzpicture}
\caption{Non-recursive representation of \Cref{exm:treatment}.}
\end{figure}

The desired behaviour of the system may be achieved if we define $\cF_{R}$ as follows. $R:=0$ if $(T=0 \wedge R\neq 1)$; $R:=\frac{1}{2}$ if $(T=1\wedge R=0)$; $R:=1$ if $(T=1\wedge R=\frac{1}{2}) \vee R=1$. 
  This model is clearly non-recursive, because $\cF_R$ depends of $R$. However, note that under temporal interpretation every edge in the dependency graph takes (at least) 1 time interval to proceed. It is a feature of computation $\cC$, which performs consecutive calls to $\cF$, where values of $\cV$ at any step $i$ depend on values of $(\cU\cup\cV)$ on the \textit{previous} step. So, $R:=1$ if $((T=1 \wedge R=\frac{1}{2})\vee R=1)$ means that ``R=1 is true \textit{now} if on the \textit{previous} step both T=1 and R=$\frac{1}{2}$ were true, or R=1 was true.''
 
  Given a temporal context, e.g. $\overset{\twoheadrightarrow}{u_1} = (0,1,0,1,1,0\dots)$ and a default assignment $\vec{v} = 00$ (we write 00 instead of (T=0,R=0)), $(\cM, \overset{\twoheadrightarrow}{u_1}, \vec{v})$ generates a computation $\cC = (00,00,10,0\frac{1}{2},10,1\frac{1}{2},01,01,\dots)$, in which $\overset{\twoheadrightarrow}{u_1}(1)$=1 triggers (T=1 at t=2), which triggers (R=$\frac{1}{2}$ at t=3). But since $\overset{\twoheadrightarrow}{u_1}(2)$=0, T=0 becomes true at t=3, leading to (R=0 at t=4). Later, at step 4, T=1 happens again, triggering (R=$\frac{1}{2}$ at t=5). Since both T=1 and  R=$\frac{1}{2}$ are true at t=5, R=1 triggers at t=6. From this moment, R=1 remains true at any t=i, because R=1 holds at i-1.
This corresponds to the temporal behavior we wanted to achieve in \Cref{exm:treatment}. Our $\CPLTL$ language allows to formulate such statements as
$(\cM, \overset{\twoheadrightarrow}{u}, \vec{v}), 6\vDash (R=1) \wedge \mathsf{H}\neg(R=1) \wedge [T(0)\leftarrow 1]\bigominus^3\G(R=1)$ meaning that at step 6 it is true that: (1) (R=1); (2) (R=1) has never been true before ($\mathsf{H}\neg(R=1)$); and (3) if the intervention $T\leftarrow 1$ was performed at step 0, R=1 would have been true for 3 time steps already (and would remain so forever). Note also that the same computation could be generated for the trivial context $\overset{\twoheadrightarrow}{u_2} = (0,0,\dots)$ and an intervention $int' = (T(2)\gets 1, T(4)\gets 1, T(5)\gets 1)$, so $\cC(\cM, \overset{\twoheadrightarrow}{u_1}, \vec{v}) = \cC^{int'}(\cM, \overset{\twoheadrightarrow}{u_2}, \vec{v})$.

In many cases non-recursive causal models are the only way to represent mutually dependent variables and feedback loops processes, which are necessary to model many interesting phenomena. However, most of the literature on actual causality is restricted to recursive models because (in static settings) non-recursiveness may create serious technical difficulties, leading to non-uniqueness of the solution of structural equations with no clear way to choose the 'correct' one \cite[Ch. 2.7]{HalpernBook}. We argue that non-recursive models do not create any technical difficulty under the temporal interpretation of SEM's. But also sometimes provide fruitful modelling tools, as we will later see. 


\paragraph{Modelling Assumptions} Here we list our modelling assumptions.
First of all, 
in our settings the time is discrete. This is a standard assumption for $\LTL$-style temporal logics. 
We also assume that the temporal context $\overset{\twoheadrightarrow}{u}$ represents a time series of exogenous changes given to us as an input. In this time series equal intervals between indexes correspond to equal time intervals. And similarly, equal time intervals correspond to equally spaced indices of the computation $\cC$. Given $\overset{\twoheadrightarrow}{u}$ time series, we want our model to return the correct time series of $\cV$ values that (temporally) correspond to the behaviour of the phenomena of our interest. So, our framework requires causal models to contain correct temporal information, which affects the way we design them. 

To illustrate this, let us revisit \Cref{exm:rocks}. Assume we know that Suzy's throws are consistently faster that Billy's. Let us say it takes $n$ time steps (e.g. seconds)  for Suzy's rock to reach the bottle, and $k$ for Billy's, where $n<k$. So, whenever Suzy decides to throw the rock at time $t_S$ and Billy at $t_B$, the Suzy's rock will reach the bottle (if it is still there) at time $t_S+n$ and Billy's at time $t_B+k$. We want our model to predict when the bottle will be shattered, given a temporal context $\overset{\twoheadrightarrow}{u}$. So, our model must contain the information about `delays' between the Suzy's (or Billy's) throw and the bottle shattering. One way to achieve this, is to add `chains' of hidden (i.e. dummy) variables in the model (\Cref{fig:temorap_rock} (a)).

\begin{figure}[ht]
\centering
    \begin{tikzpicture}
    \node[dot] [label=above:{BS}] (BS) {};
    \node[dot][label=above:{$X_{n-1}$}] (Xn) [above left=.25cm and .5cm of BS] {};
    \node[dot] [label=below:{$Y_{k-1}$}] (Yk) [below left=.25cm and .5cm of BS] {};
    \node[dot] [label=left:{ST}] (ST) [left=2.3cm of Xn] {};
    \node[dot] [label=left:{BT}] (BT)  [left=2.3cm of Yk]  {};
    \node[dot] (X1) [label=above:{$X_1$}] [right=.5cm of ST] {};
    \node (X2) [right= .5cm of X1] {$\cdots$};
     \node[dot] (Y1) [label=below:{$Y_1$}] [right=.5cm of BT] {};
    \node (Y2) [right=.5cm of Y1] {$\cdots$};

    \draw[-latex] (ST) -- (X1);
    \draw[-latex] (BT) -- (Y1);
    \draw[-latex] (X1) -- (X2);
    \draw[-latex] (X2) -- (Xn);
    \draw[-latex] (Xn) -- (BS);
    \draw[-latex] (Y1) -- (Y2);
    \draw[-latex] (Y2) -- (Yk);
    \draw[-latex] (Yk) -- (BS);
    \draw[-latex] (BS) arc(90:-180:.2);

\node (a) [below right=.3cm and .2cm of Y1] {(a)};

    \node[dot] [label=left:{ST}] (ST2) [right= 2cm of Xn] {};
\node[dot] [label=left:{BT}] (BT2) [below=.5cm of ST2] {};
\node[dot] [label=above:{X}] (X) [right=.5cm of ST2] {};
\node[dot] [label={[label distance=.1cm]300:Y}] (Y) [right=.5cm of BT2] {};
\node[dot] [label=above:{BS}] (BS2) [below right= .25cm and .7cm of X] {};
    \draw[-latex] (ST2) -- (X);
    \draw[-latex] (BT2) -- (Y);
    \draw[-latex] (X) -- (BS2);
    \draw[-latex] (Y) -- (BS2);
    \draw[-latex] (X) arc(70:-200:.2);
    \draw[-latex] (Y) arc(90:-200:.2);
    \draw[-latex] (BS2) arc(90:-180:.2);

\node (b) [below=.3cm of Y] {(b)};
\end{tikzpicture}
    \caption{(a) 'Long' and (b)'Chain-free' models.}
\label{fig:temorap_rock}
\end{figure}
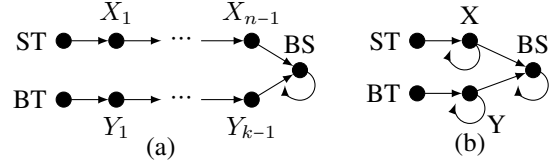

 Firstly, we assume that if BS=1 at some step, then it should remain so, because once the bottle is shattered it obviously remains so. This creates a reflexive arrow in the dependency graph. If they throw simultaneously at step $i$, then the bottle shatters at step $(i+n)$, because $n<k$. But now we can model situations when they decide to throw a rock at different time. So, if Suzy decides to throw at $t_S$ and Billy at $t_B$, then the bottle will be shattered at $t^*:=\min((t_S+n), (t_B+k))$, i.e. for $\overset{\twoheadrightarrow}{u}=(00,00,\dots)$, $(\cM, \overset{\twoheadrightarrow}{u}, \vec{v}), 0\vDash [ST(t_S)\gets 1, BT(t_B)\gets 1]\bigcirc^{t^*} (BS=1)$.

Such a representation is not compact, and non-recursive models provide us a better way to represent this example. Instead of adding a long chain of dependencies to capture time intervals between events and their effects, we can add a single variable to abbreviate each chain (\Cref{fig:temorap_rock} (b)).
Note, however, that it is not enough to specify the range of these new variables, $X$ and $Y$, as $\cR(X)=\{0, \dots, n-1\}$ and $\cR(Y)=\{0, \dots, k-1\}$, where each value `emulates' position of the rock on the original chain. This is because nothing prevents multiple variables in the corresponding chains $X_1\dots X_{n-1}$ and $Y_1\dots Y_{k-1}$ to have value 1 at the same time moment. This situation can be interpreted as multiple rocks thrown at different moments. To properly encode the temporal behaviour of the original model using a `chain-free' model, the range of $X$ and $Y$ must contain all binary strings of the length $n-1$ and $k-1$ respectively. In other words, the new values must represent not only in which position the rock is at a given time moment, but also how many rocks there are. The equations then can be defined straightforwardly, and we omit the formal description due to lack of space.
This construction provides a more compact graphical representation of the model, by reducing long chains of dependencies. It is easy to verify that as long as we are interested only in the variables $\{ST, BT, BS\}$, these two models behave identically wrt any context and time moment. We discuss the notion of  temporal equivalence in detail in the next section.

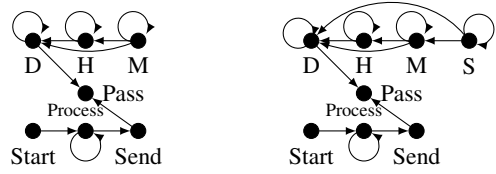
\begin{figure}[t]

\begin{tikzpicture}
\centering
   \node[dot] (D) [label=below:{\small{D}}] {};
   \node[dot] [right=.5cm of D] (H) [label=below:{\small{H}}] {};
   \node[dot] [right=.5cm of H] (M) [label=below:{\small{M}}] {};
    \node[dot] [below=1cm of D] (Start) [label=below:{\small{Start}}] {};
    \node[dot] [right=.5cm of Start] (Process) {};
    \node [above right=.001cm and -.001cm of Start] {\scriptsize{Process}};
\node[dot] [right=.5cm of Process] (Send) [label=below:{\small{Send}}] {};
\node[dot] [below=.5cm of H] (Pass) [label=right:{\small{Pass}}] {};
\draw[-latex] (H) -- (D);
\draw[-latex] (M) -- (H);
\draw[-latex] (Start) -- (Process);
\draw[-latex] (Process) -- (Send);
\draw[-latex] (D) arc(290:-30:.2);
\draw[-latex] (H) arc(290:-30:.2);
\draw[-latex] (M) arc(290:-30:.2);
\draw[-latex] (Process) arc(90:420:.2);
\draw[-latex] (M) to[out=200,in=-15] (D);
\draw[-latex] (D) -- (Pass);
\draw[-latex] (Send) -- (Pass);

\node (a) [below=.75cm of Process] {(a) Time scale = 1 min.};

   \node[dot] (D1) [label=below:{\small{D}}] [right=3.5cm of D]{};
   \node[dot] [right=.5cm of D1] (H1) [label=below:{\small{H}}] {};
   \node[dot] [right=.5cm of H1] (M1) [label=below:{\small{M}}] {};
    \node[dot] [below=1cm of D1] (Start1) [label=below:{\small{Start}}] {};
    \node[dot] [right=.5cm of Start1] (Process1) {};
     \node [above right=.001cm and -.001cm of Start1] {\scriptsize{Process}};
\node[dot] [right=.5cm of Process1] (Send1) [label=below:{\small{Send}}] {};

\node[dot] [below=.5cm of H1] (Pass1) [label=right:{\small{Pass}}] {};

\draw[-latex] (H1) -- (D1);
\draw[-latex] (M1) -- (H1);
\draw[-latex] (Start1) -- (Process1);
\draw[-latex] (Process1) -- (Send1);
\draw[-latex] (D1) arc(330:-15:.2);
\draw[-latex] (H1) arc(290:-30:.2);
\draw[-latex] (M1) arc(290:-30:.2);
\draw[-latex] (Process1) arc(90:420:.2);
\draw[-latex] (M1) to[out=200,in=-15] (D1);
\draw[-latex] (D1) -- (Pass1);
\draw[-latex] (Send1) -- (Pass1);

\node[dot] [right=.5cm of M1] (S) [label=below:{\small{S}}] {};
\draw[-latex] (S) arc(230:-100:.2);
\draw[-latex] (S) -- (M1);
\draw[-latex] (S) to[out=130,in=45] (D1);

\node (a) [below=.75cm of Send1] {(b) Time scale = 1 sec.};
   \end{tikzpicture}
\caption{Dependency graphs for \Cref{exm:deadline}.}
\label{fig:deadline}
\end{figure}

Our models must adequately represent temporal behaviour of a system. This, in turn, requires a clear temporal semantics of each `tick' of the model. To illustrate the problem, we present our final example.

\begin{example}[Deadline]\label{exm:deadline} Assume the agent has a deadline to perform some task. Let variables $D, H, M$ with $\cR(D)$=$\{Mon, \dots, Sun\}), \cR(H)$=$\{0, \dots, 23\}$ and $\cR(M)$=$\{0, \dots, 59\}$ represent days, hours and minutes respectively. At any moment the agent may decide to Start the task (Start=1). It takes 8 hours for the agent to process the task, so the range of Process variable contains $8*60=480$ values, representing minutes. Once the task is completed, it is sent to the server (Send=1). If this happens not later than Friday (D$\neq$Sat$\wedge$D$\neq$Sun), then the task is passed (Pass = 1); otherwise Pass = 0.
\end{example}

The model shown in Figure \ref{fig:deadline} (a) obviously allows us to reason about the agent's decisions, and whether the deadline is met. However the interesting point is that there are clocks embedded in the model, and is obvious from the model itself which time interval we denote as 1 computational step. We can also easily change our time scale by adding a variable S for seconds and modifying the structural equation in obvious way, see Figure \ref{fig:deadline} (b). Note, however, that the two models are not equivalent (in the sense of the previous example), because now it takes 60 times more steps for any of the variable D, H, M to change value. At the same time, it allows us to model various sub-processes with higher accuracy. So, in our framework it is crucial to understand how 1 computational step of a model is interpreted in terms of real-world time, i.e., to understand which clocks are supposed to tick along with a model.

\section{Temporal Equivalence for Causal Models}

When dealing with structural equation modelling, we usually have many alternative causal models that describe the same underlying process. These models may have different sets of variables and describe causal dependencies in different ways. Moreover, at some point we may expand the set of variables in a model or reconsider some dependencies due to new discoveries. The only requirement ensuring that different models talk about the same process is that the models share some set of common variables. In such settings, it is crucial to have an adequate notion of equivalence between models to guarantee that different models correctly represent causal (and temporal) properties of some process with respect to the variables of interest  \cite{Beckers2021eq}.
Since previous work has focused on static interpretation of SEMs (and so usually applicable only to recursive models), in this section we discuss how model equivalence can be treated in our framework.  

Following \cite{Beckers2021eq}, we assume that, two models $\cM_1$ and $\cM_2$ share the same exogenous variables ($\cU_1=\cU_2$) and a set of \textit{observable} variables $\cO\subseteq (\cV_1\cap \cV_2)$. So, we can only observe the values of and perform interventions on  $\cO$.  

\begin{definition}[Equivalent Computations]
    Consider two computations $\cC_1$ and $\cC_2$ sharing some set of variables $\cO$. We call $\cC_1$ and $\cC_2$ \emph{temporally equivalent} wrt $\cO$ if\\ $\forall X\in \cO, \forall i>0: (X=x)\in \cC_1(i)$ iff $(X=x)\in \cC_2(i)$.
\end{definition}

In other words, at any step equivalent computations agree on the values of all variables in $\cO$. 

\begin{definition}[Model Equivalence]\label{def:equiv:model} Two models $\cM_1$ and $\cM_2$ are temporally equivalent wrt $\cO$ if for any intervention $\vec{Y}(\vec{n})\leftarrow \vec{y}$, where $\vec{Y}\subseteq \cO$ and for any $(\overset{\twoheadrightarrow}{u}, \vec{v_1})$ there exists $\vec{v_2}$, such that computations for $(\cM_1^{\vec{Y}(\vec{n})\leftarrow \vec{y}}, \overset{\twoheadrightarrow}{u}, \vec{v_1})$ and $(\cM^{\vec{Y}(\vec{n})\leftarrow \vec{y}}_2, \overset{\twoheadrightarrow}{u}, \vec{v_2})$ are equivalent, and vice versa. 
\end{definition}

Temporal equivalence of causal models guarantees that no matter what intervention $\vec{Y}(\vec{n})\leftarrow \vec{y}$ we use, there is no difference in $\cC_1^{\vec{Y}(\vec{n})\leftarrow \vec{y}}$ and $\cC_2^{\vec{Y}(\vec{n})\leftarrow \vec{y}}$ in how the endogenous changes in $\cO$ proceed with exogenous changes $\overset{\twoheadrightarrow}{u}$.

\begin{observation} 
    Models $\cM_a$ and $\cM_b$ in \Cref{fig:temorap_rock} are temporally equivalent for $\cO=\{ST, BT, BS\}$.
\end{observation}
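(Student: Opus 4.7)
The plan is to exhibit, for any given $(\overset{\twoheadrightarrow}{u},\vec{v_1})$ and intervention on $\cO$, a matching default assignment $\vec{v_2}$ for $\cM_b$ such that the two computations remain bit-for-bit synchronised on $\cO$. The key idea is a concrete bijection between the states of the chain variables $X_1,\dots,X_{n-1}$ (resp.\ $Y_1,\dots,Y_{k-1}$) in $\cM_a$ and the binary-string values of the single variable $X$ (resp.\ $Y$) in $\cM_b$: at every step $i$, the bit in position $j$ of the string $\cC_2(i)|_X$ will equal $\cC_1(i)|_{X_j}$, and symmetrically for $Y$. Given $\vec{v_1}$, I would define $\vec{v_2}$ by copying the values of $ST,BT,BS$ verbatim and packing the chain values of $\vec{v_1}$ into the corresponding binary strings.

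The first step is to spell out the structural equations of $\cM_b$ precisely: the equation $\cF_X$ produces a new string by shifting the current bits of $X$ one position and inserting the current value of $ST$ at the ``entry'' end; $\cF_Y$ is defined analogously from $BT$. The equation $\cF_{BS}$ reads the ``exit'' bit of $X$ and the exit bit of $Y$, and sets $BS=1$ exactly when at least one of these is $1$ or when $BS=1$ already (matching the self-loop plus disjunction in $\cM_a$). By construction, one step of $\cF_X$ on a string encodes exactly one synchronous round of propagation $X_1\gets ST$, $X_{j+1}\gets X_j$ along the chain, and the exit bit equals $X_{n-1}$.

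The second step is induction on $i$ to prove the invariant: (a) $\cC_1(i)$ and $\cC_2(i)$ agree on $ST,BT,BS$, and (b) for every $j$, the $j$-th bit of $\cC_2(i)|_X$ equals $\cC_1(i)|_{X_j}$ (and similarly for $Y$). The base case is immediate from the choice of $\vec{v_2}$. For the inductive step, observe that $ST$ and $BT$ are driven only by $\cU$ (shared between the models), so (a) is preserved on those coordinates; the bit-shift invariant (b) follows because the chain update and the string shift are, by design, the same function on the encoded data; and the $BS$ coordinate agrees because the exit bits of $X,Y$ in $\cM_b$ coincide with $X_{n-1},Y_{k-1}$ in $\cM_a$ by (b), so both models apply the identical disjunction plus self-loop.

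The last step handles interventions. Since \Cref{def:equiv:model} only considers interventions with $\vec{Y}\subseteq\cO=\{ST,BT,BS\}$, every intervention targets variables that appear identically in both signatures, so \Cref{def:updatedcomputation} overrides the same coordinates in $\cC_1^{\vec{Y}(\vec{n})\gets\vec{y}}$ and $\cC_2^{\vec{Y}(\vec{n})\gets\vec{y}}$; the induction above goes through unchanged, with the override clause replacing the structural step at the flagged indices on both sides simultaneously. The reverse direction (given $\vec{v_2}$, construct $\vec{v_1}$) is obtained by decoding the strings back into chain tuples. I expect the main obstacle to be stating $\cF_X,\cF_Y,\cF_{BS}$ for $\cM_b$ carefully enough that the inductive invariant is preserved in the presence of multiple ``rocks in flight''---which is precisely why the range of $X,Y$ must consist of binary strings rather than single positional indices, as already noted in the text just before the observation.
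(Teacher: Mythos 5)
Your proposal is correct and takes essentially the same route as the paper, which only sketches the argument: the paper likewise encodes the chain states $X_1,\dots,X_{n-1}$ (resp.\ $Y_1,\dots,Y_{k-1}$) as binary-string values of $X$ (resp.\ $Y$) acting as shift registers, and then simply asserts that for every context, default assignment and intervention on $\cO$ a matching default assignment exists making the computations $\cO$-equivalent. Your explicit shift-register equations, the bitwise inductive invariant, and the handling of interventions on $\{ST,BT,BS\}$ fill in exactly the details the paper omits ``due to lack of space.''
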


It is easy to check, that whatever the context $\overset{\twoheadrightarrow}{u}$, default assignment $\vec{v_1}$ and an intervention $\vec{Y}(\vec{n})\leftarrow \vec{y}$ for $\vec{Y}\subseteq \cO$ are, there exists a default assignment $\vec{v_2}$, such that $(\cM_a^{\vec{Y}(\vec{n})\leftarrow \vec{y}}, \overset{\twoheadrightarrow}{u}, \vec{v_1})$ and  $(\cM_b^{\vec{Y}(\vec{n})\leftarrow \vec{y}}, \overset{\twoheadrightarrow}{u}, \vec{v_2})$ generate $\cO$-equivalent computations. However, this notion of equivalence is too strong to capture the similarities in \Cref{exm:deadline}.

\begin{observation} 
    Models $\cM_a$ and $\cM_b$ from \Cref{exm:deadline} are \textbf{not} temporally equivalent wrt $\cO=\{Start, Pass\}$.
\end{observation}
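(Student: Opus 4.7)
The plan is to exhibit one concrete counterexample: fix a context, an intervention, and a default $\vec{v_1}$ in $\cM_a$, and show that no default $\vec{v_2}$ in $\cM_b$ produces an equivalent computation with respect to $\cO = \{Start, Pass\}$. The key intuition is that one step of $\cM_a$ represents one minute whereas one step of $\cM_b$ represents one second; the day variable $D$, which ultimately drives $Pass$, therefore advances every $1440$ steps in $\cM_a$ but every $86400$ steps in $\cM_b$, and these two cadences cannot be reconciled by any choice of $\vec{v_2}$.

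Concretely, I would take $\overset{\twoheadrightarrow}{u}$ arbitrary (say identically zero), the intervention $Start(0)\leftarrow 1$, and let $\vec{v_1}$ be the ``fresh'' initial state of $\cM_a$ in which $D = Mon$ and $H = M = Process = Send = Pass = 0$. Tracing $\cC_1$ step by step then yields: $Process$ reaches its completion value $480$ at step $480$, so $Send$ latches to $1$ from step $481$ onward, and hence $Pass(i) = 1$ iff $D(i-1) \leq Fri$ for $i \geq 482$. Since $D$ cycles through the seven days with each lasting $1440$ steps, the resulting Pass signal is ultimately periodic with period $10080$: $Pass = 1$ on a $7200$-step weekday window, $Pass = 0$ on the subsequent $2880$-step weekend window. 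In particular $\cC_1|_{Pass}(7201) = 0$ and $\cC_1|_{Pass}(10081) = 1$.

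Suppose for contradiction that some $\vec{v_2}$ makes $\cC_2$ agree with $\cC_1$ at every step on $Pass$. In $\cM_b$ the day $D$ advances every $86400$ steps regardless of $\vec{v_2}$; only the phase is set by the initial values of $S, M, H, D$. To match $\cC_1|_{Pass}(7201) = 0$ we must have $D(7200) \in \{Sat, Sun\}$ in $\cC_2$; but once $D$ enters $Sat$ in $\cM_b$ it remains in $\{Sat, Sun\}$ for the following $2 \cdot 86400 = 172800$ steps. In particular $D(10080)$ is still either $Sat$ or $Sun$, so $\cC_2|_{Pass}(10081) = 0$, whereas $\cC_1|_{Pass}(10081) = 1$, a contradiction. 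Any alternative $\vec{v_2}$ that does not place $Sat$ at step $7200$ already disagrees at step $7201$ itself.

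The main obstacle is that Example~\ref{exm:deadline} specifies the structural equations only informally, so I would begin by isolating exactly the two properties the argument relies on: in $\cM_a$ exactly $1440$ steps elapse between consecutive $D$ advancements, while in $\cM_b$ exactly $86400$ do; and in both models, once $Send$ has latched to $1$, $Pass(i) = 1$ iff $D(i-1) \leq Fri$. With these two facts the argument above is self-contained, and no further details of the equations for $M, H, D, Process$ are needed. A slightly more abstract rephrasing is also available: the two Pass signals are ultimately periodic with different minimal periods ($10080$ versus $604800$), so no phase shift induced by a choice of $\vec{v_2}$ can make them coincide on all indices.
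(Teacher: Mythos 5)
Your proposal is correct and takes essentially the same route as the paper, which offers no formal proof of this observation beyond the remark that in $\cM_b$ the variables $D,H,M$ take 60 times more steps to change value --- exactly the time-scale mismatch your counterexample exploits via the $Pass$ signal. Your concrete trace with the intervention $Start(0)\leftarrow 1$ simply makes that informal justification explicit, and the residual informality (the example never fixes the structural equations precisely) is handled reasonably by isolating the two properties the argument needs.
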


We therefore need a more general notion of equivalence. Note that the models in \Cref{exm:deadline} describe identical processes, but on a different time scale. To capture this aspect, we introduce the notion of \textit{rescalable} equivalence.

\begin{definition}[Rescalably Equivalent Computations] $\cC_2$ is \emph{rescalably} equivalent to $\cC_1$ wrt $\cO$ (with a coefficient $k\in \mathbb{N}$) if $\forall X\in \cO, \forall i>0: (X=x)\in \cC_1(i)$ iff $(X=x)\in \cC_2(i\cdot k)$.
\end{definition}

Informally, $k$ demonstrates how many ticks of $\cM_2$ are needed to emulate one tick of $\cM_1$. Given an intervention $\vec{Y}(\vec{n})\leftarrow\vec{y}$ and a coefficient $k$, $\vec{Y}(\vec{n}^k)\leftarrow\vec{y}$ denotes an intervention, in which all indexes from $\vec{n}$ are multiplied by $k$.

\begin{definition}[Rescalably Equivalent Models] A model $\cM_2$ is \emph{rescalably} equivalent (with a coefficient $k$) to $\cM_1$ wrt $\cO$ if for any intervention $\vec{Y}(\vec{n})\leftarrow\vec{y}$ ($\vec{Y}\subseteq \cO$) and for any $(\overset{\twoheadrightarrow}{u}, \vec{v_1})$ there exists $\vec{v_2}$, such the computation for $(\cM_2^{\vec{Y}(\vec{n^k})\leftarrow\vec{y}}, \overset{\twoheadrightarrow}{u}, \vec{v_2})$  is rescalably equivalent (with a coefficient $k$) to the computation for $(\cM_1^{\vec{Y}(\vec{n})\leftarrow\vec{y}}, \overset{\twoheadrightarrow}{u}, \vec{v_1})$.
\end{definition}

\begin{observation}
    Model $\cM_b$ in \Cref{exm:deadline} is rescalably equivalent to $\cM_a$ wrt $\cO=\{Start, Pass\}$ with $k=60$.
\end{observation}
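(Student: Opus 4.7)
My plan is to establish the rescalable equivalence by constructing an appropriate default assignment for $\cM_b$ and then proving a step-aligned correspondence between the two computations by induction on the $\cM_a$-step index.

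First I would fix an arbitrary intervention $int_1 = \vec{Y}(\vec{n})\leftarrow \vec{y}$ with $\vec{Y}\subseteq \cO$, a temporal context $\overset{\twoheadrightarrow}{u}$, and default assignment $\vec{v_1}$ for $\cM_a$. Construct $\vec{v_2}$ for $\cM_b$ by copying the values of all variables shared between the two models (i.e., $D, H, M, Start, Process, Send, Pass$) from $\vec{v_1}$, and setting $S = 0$. The rescaled intervention is $int_2 = \vec{Y}(\vec{n}^{60})\leftarrow \vec{y}$. Let $\cC_1$ and $\cC_2$ denote the resulting updated computations of $\cM_a$ and $\cM_b$ respectively.

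The core claim, to be proved by induction on $i$, is that for every variable $X$ shared between $\cM_a$ and $\cM_b$, $\cC_1(i)|_X = \cC_2(60i)|_X$. This implies rescalable equivalence with $k = 60$ because $\cO$ is contained in the shared variables. The base case holds by the construction of $\vec{v_2}$. For the inductive step, I would show that during the 60 consecutive sub-ticks of $\cM_b$ running from step $60i$ to step $60(i+1)$, the net change in every shared variable agrees with the single $\cM_a$-step from $i$ to $i+1$: the S variable cycles once through $0, \ldots, 59$; the only tick at which M (and, transitively, H and D) is updated in $\cM_b$ corresponds to the unique tick at which it is updated in $\cM_a$; the Process counter (whose range in $\cM_b$ must be rescaled to 28800 seconds) advances 60 sub-ticks, exactly matching one $\cM_a$-minute of progress; Start and Pass, being driven either exogenously or by conditions on the clock/Send variables, inherit their values from the rescaled intervention or the appropriate equation at the right sub-tick. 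Because $int_2$ only touches steps that are multiples of 60, no sub-tick between $60i$ and $60(i+1)$ is disturbed by an intervention, so the endogenous dynamics run cleanly.

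The main obstacle is that the paper only gives $\cM_b$ informally (the seconds variant is described as ``modifying the structural equation in the obvious way''), so the proof must commit to a precise form of the rescaled equations to make the induction go through. In particular, $Process$ must be rescaled to count at second-resolution, the $M$-equation in $\cM_b$ must increment only when $S$ wraps, and $Start$ must persist across sub-ticks (a reflexive edge analogous to those on $D, H, M$) so that a value set by an intervention at step $60i$ is not overwritten before step $60(i+1)$. Once these equations are fixed, the bookkeeping across 60 consecutive sub-ticks is entirely routine and the claimed equality $\cC_1(i)|_X = \cC_2(60i)|_X$ follows immediately from the inductive hypothesis together with the determinism of $\cF_X$.
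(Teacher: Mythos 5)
The paper states this observation without proof, so your strategy (exhibit $\vec{v_2}$, rescale the intervention, and induct over blocks of $60$ sub-ticks) is the natural way to discharge it, and you are right that any proof must first commit to concrete equations for $\cM_b$, which the paper leaves as ``the obvious modification''. However, the step you dismiss as routine bookkeeping is exactly where the argument breaks under the equations you commit to. Your invariant $\cC_1(i)|_X = \cC_2(60i)|_X$ for \emph{all} shared variables cannot hold for the pipeline variables: in Figure \ref{fig:deadline} neither $Send$ nor $Pass$ has a self-loop, so they are one-tick events, and the causal delays on the edges $Process \to Send$ and $Send \to Pass$ are one tick in \emph{both} models, i.e.\ one minute in $\cM_a$ but only one second in $\cM_b$. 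Concretely, if $Start$ is set at $\cM_a$-step $t_0$ (hence at $\cM_b$-step $60t_0$), then $\cM_a$ yields $Pass=1$ at step $t_0+482$ (480 ticks of $Process$, one for $Send$, one for $Pass$), whereas $\cM_b$ with your rescaled $Process$ (28800 values) yields $Pass=1$ at second $60t_0+28802$, which is neither $60(t_0+482)=60t_0+28920$ nor even a multiple of $60$. So agreement on $\cO=\{Start,Pass\}$ at the rescaled points fails; even making $Pass$ latch in both models leaves a mismatch at $i=t_0+481$. A correct proof must \emph{design} $\cM_b$ so that the total $Start$-to-$Pass$ latency is exactly $60$ times that of $\cM_a$ (e.g.\ by padding the $Process$ range or inserting $60$-second delays on the $Send$ and $Pass$ stages), and then run your induction with an invariant that only tracks $\cO$ at rescaled points plus whatever auxiliary alignment data the padded equations need; the choice of equations is the substance of the proof, not a side remark.

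A second, smaller point: the definition of rescalable equivalence feeds the \emph{same} unrescaled context $\overset{\twoheadrightarrow}{u}$ to both models, so if $Start$ were exogenously driven, $\cM_a$ would read $\overset{\twoheadrightarrow}{u}(i-1)$ at step $i$ while $\cM_b$ reads $\overset{\twoheadrightarrow}{u}(60i-1)$ at step $60i$, and your block-alignment argument would collapse. Your hedge ``driven either exogenously or by conditions on the clock/Send variables'' glosses over this; the argument implicitly requires that $Start$ is set only by the (rescaled) interventions and that the exogenous context is irrelevant to $\cO$, and this assumption should be made explicit.
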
 


We disagree with \citet{Beckers2021eq} that it is impossible to come up with a useful notion of equivalence that takes into account `numerical' properties.  We believe that we have proposed such a notion, and it is useful because in temporal settings it is essential to consider not only \textit{what} happened, but also how much time it took to happen.

\section{Model-Checking}


In this section, we study the model-checking complexity of $\CPLTL$.

\begin{definition}[Model-checking] The $\CPLTL$ \emph{model-checking problem} is, given $(\cM, \overset{\twoheadrightarrow}{u}, \vec{v}, t)$ and a $\CPLTL$ formula $\varphi$, to decide whether $(\cM, \overset{\twoheadrightarrow}{u}, \vec{v}), t\vDash \varphi$.
\end{definition}

Note that the input to the problem is not necessarily finite or finitely presentable, since only $\cM, \vec{v}$ and $t$ are finite objects. To ensure that $\overset{\twoheadrightarrow}{u}$ is finitely presentable, analogously to \Cref{def:periodicmodel}, we require that $\overset{\twoheadrightarrow}{u}$ is \textit{ultimately periodic} of type $(n, m)$, i.e. there exist $n,m>0$ such that $\overset{\twoheadrightarrow}{u}(k)=\overset{\twoheadrightarrow}{u}(k+m)$ for every $k\geq n$. We say that $\overset{\twoheadrightarrow}{u}[0, n-1]$ is the prefix and $\overset{\twoheadrightarrow}{u}[n, n+m]$ is the loop of $\overset{\twoheadrightarrow}{u}$. 

\begin{definition}[Model Size] The size of a periodic temporal context $\overset{\twoheadrightarrow}{u}$ of type $(n, m)$ is  $$|\overset{\twoheadrightarrow}{u}| = \sum\limits_{0\leq j\leq (n+m)}|\overset{\twoheadrightarrow}{u}(j)|.$$
The size of a model $\cM$, denoted $|\cM|$, is $$|\cM|:= |\cU|+|\cV| + |\cR| + |F|,$$
where $|\cR| = \Sigma_{X\in (\cU\cup \cV)}|\cR(X)|$ and
$|\cF|$ is the cardinality of the set of tuples in the extensional definition 
of $\cF$ (which is $|\cV| \cdot \prod_{X\in (\cU\cup \cV)}|\cR(X)|)$). 
$|\vec{v}|= |\cV|$. We assume that all numbers are written in unary, in particular $|t| = t$.
Then, 
$|(\cM, \overset{\twoheadrightarrow}{u}, \vec{v}, t)|= |\cM|+|\overset{\twoheadrightarrow}{u}|+|\vec{v}|+|t|$.
\end{definition}



The size of $\varphi$, denoted $|\varphi|$, is the number of symbols in $\varphi$, assuming that numbers are written in unary.\footnote{Unary encoding gives more intuitive results, e.g., traversing a path up to position $n$ is linear rather than exponential in $|n|$.}

Informally, our approach is as follows. First,  we show that computation $\cC^{int}$ for $(\cM, \overset{\twoheadrightarrow}{u}, \vec{v})$ and an intervention $int$ from $\varphi$ is ultimately periodic of type $(x,y)$ for some $x,y\in \mathbb{N}$. Then, we use a $\PLTL$ \textit{path} model-checker \cite{MARKEY200287} to verify $(\cC^{int}, t)\vDash_{\PLTL}\psi$ for each intervention subformula of $\varphi$ of the form $[int]\psi$ 
and compute the value of their boolean combination.

\begin{algorithm}[t!]
\caption{Computing $\cC^{\vec{Y}(\vec{n})\leftarrow \vec{y}}$ of ($M, \overset{\twoheadrightarrow}{u}, \vec{v}, \vec{Y}(\vec{n})\leftarrow \vec{y}$), type of $\overset{\twoheadrightarrow}{u}$ is $(n,m)$ }\label{alg:compute} 
\small
\begin{algorithmic}[1] 
\Procedure{\textsc{periodic-comp}}{$M, \overset{\twoheadrightarrow}{u}, \vec{v}, \vec{Y}(\vec{n})\leftarrow \vec{y}$}
\State $n^{int}\gets \max(n_1, \dots, n_k \in \vec{n})$
\State $n^* \gets max(n,n^{int})$
\State $C(0) \gets \{X = x' \mid X(0) \gets x' \in int\}$
\State $C(0)\gets C(0)\cup \{ X=x'' \mid X = x'' \in \vec{v}\ \wedge\ $
\StatexIndent[8.5] $\nexists x (X(0) \gets x \in int) \}$
 \For{$i\in [1, n^*]$}
\If{$i < n^{int}$}
\State $C(i) \gets \{ X = x' \mid X(i) \gets x' \in int \}$
\State $C(i) \gets C(i) \cup \{ X = \cF_X(\overset{\twoheadrightarrow}{u}(i-1), C(i-1)) \mid$
\StatexIndent[7.75] $\nexists x (X(i) \gets x \in int)\} $
\EndIf
\State $C(i) \gets \{ X = \cF_X(\overset{\twoheadrightarrow}{u}(i-1), C(i-1)  \mid X \in \cV\} $
\EndFor
\Repeat
\State $i \gets i + 1$
\State $C(i) \gets \{ X = \cF_X(\overset{\twoheadrightarrow}{u}(i-1), C(i-1)  \mid X \in \cV\} $
\Until{$C(i) = C(j) \wedge (i \mod m) = (j \mod m)$ 
\StatexIndent[2.5] for some $n^* \leq j < i$ }
\State $y \gets i - j$
\State $x \gets i - y$
\State{$\cC^{int} \gets C$}
\EndProcedure
\end{algorithmic}
\end{algorithm}

Algorithm \ref{alg:compute} first computes the prefix of the computation by applying all the interventions (lines 7--9) and, if necessary, continues the computation until the end of the prefix of $\overset{\twoheadrightarrow}{u}$ (line 10). We then start searching for a cycle (lines 11-14). Note that, to find a cycle, it is not sufficient to find $i, j$ such that $\cC(i)=\cC(j)$ and $\overset{\twoheadrightarrow}{u}(i) =
\overset{\twoheadrightarrow}{u}(j)$, as $\overset{\twoheadrightarrow}{u}(i)$ can be the same at different points in the loop of $\overset{\twoheadrightarrow}{u}$. Instead, we need to consider positions $p_0, \ldots, p_{m-1}$ in the loop of $\overset{\twoheadrightarrow}{u}$ and find $i, j$
such that $\cC(i) = \cC(j)$ and \emph{$\overset{\twoheadrightarrow}{u}$ is at the same position in its loop} at $i$ and at $j$.

Observe that the loop at  lines (11--14) terminates after at most $m\cdot |\times_{X\in\cV}\cR(X)|$, where $\times_{X\in \cV}\cR(X)$ is the set of all possible assignments of the variables in $\cV$. This is also the upper bound on $y$ (the length of loop of $\cC^{\vec{Y}(\vec{n})\leftarrow \vec{y}}$).


Given a periodic computation $\cC$, a $\mathsf{PLTL}$ formula $\varphi$ and a natural number $i$, the \textit{path} model-checking problem is to decide whether $(\cC, i)\vDash_{\PLTL} \varphi$. The future modalities $\bigcirc\psi$ and $\chi\U\psi$ can be solved straightforwardly by a standard labelling algorithm for $\mathsf{LTL}$ (e.g., \cite{GorankoBook}) in time $\cO(|\cC|\cdot |\varphi|)$. However, this  approach does not extend to the full $\mathsf{PLTL}$. 
We therefore use the technique presented in \cite{MARKEY200287}. Let $h_P(\varphi)$ denote the \textit{past}-temporal height of $\varphi$, which is the maximum number of nested \textit{past} modalities in $\varphi$.


\begin{lemma}[Markey \shortcite{MARKEY200287}]\label{lemma:PLTLheight} For any periodic $\cC$ of type $(x, y)$, $\varphi\in \PLTL$ and $k\geq x+h_{P}(\varphi)\cdot y: (\cC, k)\vDash_{\PLTL} \varphi \text{ iff } (\cC, k+y)\vDash_{\PLTL} \varphi$. 
\end{lemma}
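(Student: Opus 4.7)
The plan is to prove the statement by structural induction on $\varphi$, simultaneously tracking $h_P(\varphi)$. The induction hypothesis will be exactly the statement of the lemma: whenever $k \geq x + h_P(\varphi) \cdot y$, the truth value of $\varphi$ at $k$ and at $k+y$ coincide. Intuitively, each nested past operator can look back at most one extra "loop length" into the computation, so we need $h_P(\varphi)$ loop iterations of buffer between $k$ and the prefix in order for shifting by $y$ to remain safe for all subformulas.

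The base case $(X=x)$ uses $h_P=0$ and the periodicity assumption directly: since $k \geq x$, $\cC(k)=\cC(k+y)$, so the atoms agree. The Boolean cases are immediate from the inductive hypothesis applied to the immediate subformulas, noting $h_P(\neg\varphi)=h_P(\varphi)$ and $h_P(\varphi\wedge\psi)=\max(h_P(\varphi),h_P(\psi))$. For $\bigcirc\varphi$ (with $h_P(\bigcirc\varphi)=h_P(\varphi)$) the argument just shifts $k$ by $1$, and the bound still holds for the subformula. For $\varphi\U\psi$ (same $h_P$ as the $\max$ of subformulas), a forward witness $j$ at $k$ gives a witness $j$ at $k+y$ by applying the IH pointwise: every position $k+i$ with $i\geq 0$ is still $\geq x + h_P(\varphi)\cdot y$, so $(\cC,k+i)\vDash\varphi$ iff $(\cC,k+y+i)\vDash\varphi$, and similarly for $\psi$. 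The backward direction is symmetric.

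The past modalities are where $h_P$ strictly increases, and the extra $+y$ in the threshold is exactly what we consume. For $\bigominus\varphi$, since $h_P(\bigominus\varphi)=h_P(\varphi)+1$, the assumption $k\geq x + (h_P(\varphi)+1)\cdot y$ gives $k-1\geq x + h_P(\varphi)\cdot y$, so the IH applies at $k-1$ and at $k-1+y$, yielding the claim. The main obstacle is the $\sS$ case, because the witness $j\in[0,k]$ can sit anywhere, including in the prefix, and shifting $j$ by $y$ might either leave the interval or land inside the prefix. The trick is a case split on $j$ relative to $k-y$:
\begin{itemize}
\item If $j \leq k-y$, keep $j'=j$. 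Then $(\cC,j')\vDash\psi$ holds trivially, and for any $i'\in (k,k+y]$ we have $i'-y\in(j,k]$, so $(\cC,i'-y)\vDash\varphi$; the IH then lifts this to $(\cC,i')\vDash\varphi$ because the assumption $k\geq x + (\max(h_P(\varphi),h_P(\psi))+1)\cdot y$ guarantees $i'-y \geq x + h_P(\varphi)\cdot y$.
\item If $j > k-y$, set $j'=j+y$. The bound on $k$ gives $j > x + h_P(\psi)\cdot y$, so the IH transports $(\cC,j)\vDash\psi$ to $(\cC,j+y)\vDash\psi$; and for $i'\in(j+y,k+y]$, applying IH to the position $i=i'-y\in(j,k]$ yields $(\cC,i')\vDash\varphi$.
\end{itemize}
The backward direction of the $\sS$ case is handled by the same case split, reading the IH equivalences in the opposite direction. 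This completes the induction and proves that the truth value of any $\varphi\in\PLTL$ is periodic with period $y$ past the threshold $x+h_P(\varphi)\cdot y$.
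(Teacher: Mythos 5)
Your proof is correct. Note that the paper does not prove this lemma itself: it is Markey's result, and the paper only remarks that the original proof carries over once atomic propositions are replaced by statements $(X=x)$. Your structural induction is essentially the standard argument behind that result, and the one delicate step --- the $\sS$ case --- is handled properly: the case split on whether the witness lies at or below $k-y$ ensures that every position at which you invoke the induction hypothesis stays above the threshold $x+h_P(\cdot)\cdot y$, which is exactly where the extra $+y$ contributed by each past operator is consumed (and the $\bigominus$ case consumes it in the same way). One small observation: in the backward direction of the $\sS$ case, when the witness $j'\leq k$ you do not need the induction hypothesis at all, since $(j',k]\subseteq (j',k+y]$ already gives the required $\varphi$-positions; the IH is only needed when $j'>k$, mirroring your second forward case.
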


This lemma states that after some initial segment of $\cC$ past modalities in $\varphi$ cannot distinguish how many times the loop has been repeated. In other words, in order to verify if $(\cC, i)\vDash \varphi$ for any $i$, it is sufficient to check only the first $x+(h_{P}(\varphi)+1)\cdot y$ elements of the computation $\cC$, and if $i>x+(h_P(\varphi)+1)\cdot y$, then it is sufficient to find $k=(i-(x+(h_P(\varphi)+1)\cdot y)\mod y$ and check $(\cC, (k+x+h_P(\varphi)\cdot y))\vDash \varphi$.

\begin{corollary}[Path model-checking]\label{corollary:pathMC} 
    Given an ultimately periodic $\cC$ of type $(x, y)$, $\varphi\in \PLTL$ and $i\in \mathbb{N}$, path model-checking $(\cC, i)\vDash_{\PLTL} \varphi$ can be done in time $\cO((x+(h_{P}(\varphi)+1)\cdot y)\cdot |\varphi|)$. 
\end{corollary}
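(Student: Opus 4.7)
The plan is to reduce the problem to labelling a finite prefix of $\cC$ and then to apply a standard $\PLTL$ labelling procedure. By \Cref{lemma:PLTLheight}, for any $k \geq x + h_P(\varphi) \cdot y$ the truth value of $\varphi$ at position $k$ agrees with its truth value at position $k + y$. Hence, if the query position $i$ exceeds $N := x + (h_P(\varphi)+1)\cdot y$, I compute $i' := x + h_P(\varphi)\cdot y + \bigl((i - x - h_P(\varphi)\cdot y) \bmod y\bigr)$, so that $i' < N$ and $(\cC, i) \vDash_{\PLTL} \varphi$ iff $(\cC, i') \vDash_{\PLTL} \varphi$. This shows it suffices to decide $\PLTL$ satisfaction at a single position inside the prefix of length $N$.

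Next, I label each position $j \in \{0, \dots, N-1\}$ with the set of subformulas of $\varphi$ satisfied at $(\cC, j)$, processing subformulas bottom-up by syntactic depth. Atomic labels $(X=x)$ are read directly from $\cC(j)$; Boolean combinations take constant time per position once the labels of their children are known. The past operators $\bigominus\psi$ and $\psi \sS \chi$ are handled by a single forward sweep over $j = 0, 1, \dots, N-1$ using the obvious recurrences, costing $\cO(N)$ per subformula.

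The main obstacle is the future operators $\bigcirc \psi$ and $\psi \U \chi$, which may, at positions near the end of the prefix, refer to truth values beyond position $N-1$. I handle them by a backward sweep exploiting the periodicity of $\cC$: for any subformula $\chi$ already labelled, the label of $\chi$ at position $j \geq x$ equals its label at position $j + y$ whenever both positions lie in the window where \Cref{lemma:PLTLheight} applies. In particular, for any position $j$ in the last loop of the prefix I identify the corresponding position $j - y$ inside the window $[x + h_P(\varphi)\cdot y,\, x + (h_P(\varphi)+1)\cdot y)$ and import its label, effectively wrapping the loop. Using this, the backward recurrences for $\bigcirc\psi$ and $\psi \U \chi$ can be run in $\cO(N)$ time per subformula without ever needing data outside the truncation.

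Summing over all $\cO(|\varphi|)$ subformulas yields a total running time of $\cO(N \cdot |\varphi|) = \cO\bigl((x + (h_P(\varphi)+1)\cdot y) \cdot |\varphi|\bigr)$, and the final answer is obtained by reading the label of $\varphi$ at position $i'$. The only point requiring care is to justify the wrap-around import for mixed past/future formulas; this follows because \Cref{lemma:PLTLheight} is applied separately to each subformula $\chi$ (whose past-height is at most $h_P(\varphi)$), so once the label arrays are correct on the window, the induction step for its parent is sound.
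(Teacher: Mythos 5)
Your overall route is the same as the paper's, which states the reduction informally and defers the details of both \Cref{lemma:PLTLheight} and the labelling procedure to Markey: fold the query position into the window $[x+h_P(\varphi)\cdot y,\ x+(h_P(\varphi)+1)\cdot y)$ using the lemma, then label the truncation of length $N:=x+(h_P(\varphi)+1)\cdot y$ bottom-up over subformulas. Your position-folding formula, the handling of atoms, Boolean connectives and the past operators (which never look beyond the truncation), and the import for $\bigcirc\psi$ are all sound: for $\bigcirc\psi$ at $N-1$ the value needed at $N$ is that of the \emph{child} $\psi$, which by the lemma equals its already-computed label at $N-y$.

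The gap is in $\psi\U\chi$. There the backward recurrence at position $N-1$ needs the value of $\psi\U\chi$ \emph{itself} at position $N$; importing it as the value at $N-y$ is circular, since $N-y$ is produced later in the same backward sweep. The wrapped recurrence is a system of Boolean equations around the loop which may have several solutions, and the Until semantics is the least one: if $\chi$ holds nowhere in the loop window and $\psi$ holds everywhere, both all-true and all-false satisfy the equations, and only all-false is correct (consider $\F(X{=}x)\equiv\top\U(X{=}x)$ with $X=x$ absent from the loop), so ``run the backward recurrence with the import'' is not yet an algorithm. The standard repair keeps your bound: extend the \emph{children's} label arrays by one further loop copy, setting their values at positions $p\in[N,N+y)$ to those at $p-y$ (valid by \Cref{lemma:PLTLheight} applied to each child, whose past height is at most $h_P(\varphi)$), run the backward sweep from $N+y-1$ down to $0$ with seed false beyond $N+y-1$, and note that for any $j<N$ a semantic witness for the Until, if one exists, exists within the next $y$ steps (shift a far witness back by $y$ using periodicity of the children's values); hence the labels computed on $[0,N)$ are correct, at cost $\cO(N+y)$ per subformula, preserving $\cO((x+(h_P(\varphi)+1)\cdot y)\cdot|\varphi|)$ overall. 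A minor point: your description of the import reads backwards --- for $j$ in the last loop of the prefix, $j-y$ lies \emph{outside} the window; what you need is to map positions $j\ge N$ back to $j-y$ inside it.
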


Original proofs of \Cref{lemma:PLTLheight} and \Cref{corollary:pathMC} can be found in \cite{MARKEY200287}. The only difference with our approach is that we use atomic statements $(X=x)$ instead of atomic propositions. Now we are ready to establish the main result. 

\begin{theorem} $\mathsf{CPLTL}$ model-checking is in \textsc{P}. 
\end{theorem}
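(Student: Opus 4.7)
The plan is to reduce $\CPLTL$ model-checking to polynomially many calls to the $\PLTL$ path model-checker of Corollary~\ref{corollary:pathMC}, exploiting the fact that the $\CPLTL$ grammar forbids nesting one intervention inside another. Every $\CPLTL$ formula $\varphi$ is a Boolean combination of subformulas of the form $[int]\psi$ with $\psi\in\PLTL$, and there are at most $|\varphi|$ such subformulas; the overall algorithm evaluates each $[int]\psi$ at $(\cM,\overset{\twoheadrightarrow}{u},\vec{v},t)$ independently and then computes the Boolean combination bottom-up in linear time.

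For one intervention subformula $[int]\psi$ I would first run Algorithm~\ref{alg:compute} on $(\cM,\overset{\twoheadrightarrow}{u},\vec{v},int)$ to obtain an ultimately periodic representation of $\cC^{int}$, of some type $(x,y)$. The prefix phase iterates at most $n^{*}=\max(n,n^{int})$ times, and both $n$ (the prefix length of $\overset{\twoheadrightarrow}{u}$) and $n^{int}$ (the largest timestamp in $int$, written in unary) are bounded by the size of the input. The cycle-search phase terminates after at most $m\cdot\prod_{X\in\cV}|\cR(X)|$ iterations, because each iteration must produce a previously unseen pair consisting of the endogenous state and the current position modulo $m$; each iteration performs $|\cV|$ lookups in the extensional table of $\cF$. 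Hence both $x$ and $y$ are polynomial in $|(\cM,\overset{\twoheadrightarrow}{u},\vec{v},t)|$, and the representation of $\cC^{int}$ has polynomial size.

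Next I would invoke the path model-checker of Corollary~\ref{corollary:pathMC} on the triple $(\cC^{int},t,\psi)$; if $t>x+(h_{P}(\psi)+1)\cdot y$, the position of evaluation is first folded back into the window $[x+h_{P}(\psi)\cdot y,\ x+(h_{P}(\psi)+1)\cdot y]$ by means of Lemma~\ref{lemma:PLTLheight}. Its running time is $\mathcal{O}((x+(h_{P}(\psi)+1)\cdot y)\cdot|\psi|)$, which is polynomial since $x$, $y$, $h_{P}(\psi)\leq|\psi|$ and $|\psi|$ are all polynomial in the input. Doing this for each of the at most $|\varphi|$ intervention subformulas and combining the resulting truth values by a bottom-up traversal of $\varphi$ yields an algorithm whose total runtime is polynomial in $|(\cM,\overset{\twoheadrightarrow}{u},\vec{v},t)|+|\varphi|$.

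The main obstacle to watch for is the apparent exponential blow-up in the cycle-search bound $m\cdot\prod_{X\in\cV}|\cR(X)|$. This is harmless because the preceding definition of model size already charges $|\cF|=|\cV|\cdot\prod_{X\in\cU\cup\cV}|\cR(X)|$ into $|\cM|$, so this product is polynomial in the input by convention. A smaller but essential subtlety is that the na\"ive cycle test ``$\cC(i)=\cC(j)$'' is unsound, because $\overset{\twoheadrightarrow}{u}$ can revisit the same exogenous value at different offsets within its period; matching on the pair (endogenous state, position modulo $m$), as Algorithm~\ref{alg:compute} does, is what recovers soundness. Lemma~\ref{lemma:PLTLheight} plays the analogous role for the past modalities inside $\psi$, ruling out the concern that repeated traversals of the loop might distinguish formulas of higher past-depth.
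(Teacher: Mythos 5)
Your proposal is correct and follows essentially the same route as the paper: decompose $\varphi$ into its (non-nested) intervention subformulas, build an ultimately periodic representation of each $\cC^{int}$ with Algorithm~\ref{alg:compute}, apply the path model-checker of Corollary~\ref{corollary:pathMC} (via Lemma~\ref{lemma:PLTLheight}) to each $(\cC^{int},t,\psi)$, and combine the Boolean values. If anything, your accounting is slightly more careful than the paper's, since you bound the type $(x,y)$ of $\cC^{int}$ explicitly and note that the loop bound $m\cdot\prod_{X\in\cV}|\cR(X)|$ is polynomial because the extensional table $|\cF|$ is already part of $|\cM|$.
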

\begin{proof} Let our input be $((\cM, \overset{\twoheadrightarrow}{u}, \vec{v}, t), \varphi)$ where $\overset{\twoheadrightarrow}{u}$ has the prefix length $n$ and the loop length $m$.
Note that any $\CPLTL$ formula is a boolean combination of intervention subformulas of the form $[\vec{Y}(\vec{n})\leftarrow \vec{y}]\psi$, where $\psi\in\PLTL$.
For every intervention subformula $\chi'=[\vec{Y'}(\vec{n'})\leftarrow \vec{y}]\psi'$ of $\varphi$ 
we generate a computation $\cC^{\vec{Y'}(\vec{n'})\leftarrow \vec{y'}}$ using \Cref{alg:compute} and verify if the $\PLTL$ formula $\psi'$ holds at $(\cC^{\vec{Y'}(\vec{n'})\leftarrow \vec{y'}}, t)$. By \Cref{def:semanticsCPLTL}, $(\cC^{\vec{Y'}(\vec{n'})\leftarrow \vec{y'}}, t)\vDash_{\PLTL} \psi'$ iff $(\cM, \overset{\twoheadrightarrow}{u}, \vec{v}), t\vDash \chi'$. Finally, it remains to substitute the truth values of all subformulas in $\varphi$. The overall procedure requires $\cO(|Sub(\varphi)|)$ calls to \Cref{alg:compute}, and each call takes $\cO((\max(|\overset{\twoheadrightarrow}{u}|, |\varphi|)+|\overset{\twoheadrightarrow}{u}|)\cdot|\cM|)$ steps to generate $\cC'$. Checking each $(\cC', t)\vDash_{\PLTL} \psi'$ can be done in $\cO((n+(h_{P}(\varphi)+1)\cdot m)\cdot |\varphi|)$ by \Cref{corollary:pathMC}, where $\cO(|h_{P}(\varphi)|)=\cO(|Sub(\varphi)|)$. The final substitution takes $\cO(|Sub(\varphi)|)$ more steps. Thus, $\CPLTL$ model-checking problem is solvable in polynomial time.
\end{proof}

\section{Temporal Causal Models with Arbitrary Delays}

Now we demonstrate that TSEM framework is not limited to 1-step delays in causal dependencies and can easily be extended to accommodate arbitrary long delays with only minor modifications.

As before, we assume that $\cU$ and $\cV$ are finite sets of exogenous and endogenous variables respectively, and $\cR$ is a range function associating with every variable $X_i\in \mathcal{V}$ a
non-empty \textit{finite} set $\mathcal{R}(X_i)=\{x_1, \dots, x_k\}$ of possible values. The values of variables may change over time, and these changes are governed by temporal structural equations, which describe how the current value of each $X_i\in \cV$ depends on the previous values of
variables in $\cV$. Since the time lags of causal dependencies can now be arbitrary, we use superscripts to denote these temporal delays;  for example, if the current value of a variable $Y$ depends on the value of $X$ 1 step ago and on the value of $Z$ 3 steps ago, we say that $X^{-1}$ and $Z^{-3}$ are the temporal parents of $Y$.

A \textit{domain function} $\cD$ maps each variable in $\cV$ to a subset of $\bigcup\limits_{0<t\leq \xi}\{X_i^{-t} \mid X_i\in \cU\cup\cV\}$, where 
$\xi \in \mathbb{Z}^+$ is the maximal temporal delay in the model, i.e., the maximal delay with which changes in one variable may affect the value of another variable.  

A causal model $\cM$ consists of a signature $\cS= (\cU,\cV, \cR, \cD)$ and a set of temporal structural equations $\cF = \{\cF_Y\mid Y\in \cV\}$. Each structural equation $\cF_Y$ defines how the current value of $Y$ depends on the previous values of the variables in its domain $\cD(Y)$. Formally:

\begin{definition}[TSEM]\label{def:TSEM2} A 
Temporal Structural Equation Model over a signature $\mathcal{S}$ is a tuple $\cM = (\mathcal{S}, \cF)$, where $\cF$ associates with every variable $Y \in \mathcal{V}$ a function 
$$\mathcal{F}_Y:\prod\limits_{X^{-t}_i \in \cD(Y)}\cR(X^{-t}_i)\longrightarrow\cR(Y)$$
\end{definition}
%
\noindent Note that $Y$ may depend on the same variable $X_i$ (including the case $X_i=Y$) at different delays. For example, $\cF_Y$ may describe an equation $Y:= (Y^{-1}+Y^{-3}+Z^{-1})\times Z^{-5}$. 

Similarly to 1-step models, the changes in exogenous variables are represented as a temporal context $\overset{\twoheadrightarrow}{u}$ given as an input, while the changes in the values of variables $\cV$ are induced by a temporal structural equations $\cF$ and represented as a computation $\cC$.  
A computation is a series  of \emph{configurations} of $\cM$, i.e., complete assignments of values to variables $\cV$, denoted  $\vec{v}$. We also use $\vec{v}^{|X}$ to denote $\vec{v}$ restricted to $X \in \cV$. A sequence of complete assignments, which we call a \emph{history}, is denoted $\overset{\twoheadrightarrow}{v} = (\vec{v}_{0}, \dots, \vec{v}_{n})$. We use $\overset{\twoheadrightarrow}{v}[-i]$ to refer to  $i$'s last element of $\overset{\twoheadrightarrow}{v}: \overset{\twoheadrightarrow}{v}[-i] = \vec{v}_{(n-i+1)}$, e.g. $\overset{\twoheadrightarrow}{v}[-1]$ denotes the last element of $\overset{\twoheadrightarrow}{v}$.  We also use $d(Y)$ to denote the maximal temporal depth of the parents of $Y\in \cV$, i.e., $d(Y) = \max (t\mid X_i^{-t}\in \cD(Y))$. Given $|\overset{\twoheadrightarrow}{v}|\geq d(Y)$, we use $\overset{\twoheadrightarrow}{v}^{|\cD(Y)}$ to denote the function mapping each $X_i^{-t}\in \cD(Y)\cap\cV$ to $\overset{\twoheadrightarrow}{v}[-t]^{|X_i}$. Similarly, for $\overset{\twoheadrightarrow}{u}[0:n]$ with $n\geq d(Y)$, $\overset{\twoheadrightarrow}{u}[0:n]^{|\cD(Y)}$ maps each $X_i^{-t}\in \cD(Y)\cap\cU$ to $\overset{\twoheadrightarrow}{u}[-t]^{|X_i}$

Given a history $\overset{\twoheadrightarrow}{v} = [0, \dots, \vec{v}_{n}]$ and an initial segment of a temporal context $\overset{\twoheadrightarrow}{u}[0:n]$, we define a \emph{call} to $\cF_X$ with input $(\overset{\twoheadrightarrow}{u}[0:n],\overset{\twoheadrightarrow}{v})$ as 

$$\cF_X(\overset{\twoheadrightarrow}{u}[0:n],\overset{\twoheadrightarrow}{v})=\begin{cases} \overset{\twoheadrightarrow}{v}[-1]^{|X}; \text{ if } n<d(X) \\
\cF_X((\overset{\twoheadrightarrow}{u}[0:n], \overset{\twoheadrightarrow}{v})^{|\cD(X)}); \text{ otherwise } 
    
\end{cases} $$

Informally, if the history $\overset{\twoheadrightarrow}{v}$ and the context $\overset{\twoheadrightarrow}{u}$ are long enough, then the value of $X$ is determined according to its structural equation $\cF_X$. However, if $\overset{\twoheadrightarrow}{v}$ and $\overset{\twoheadrightarrow}{u}$ are shorter than $d(X)$, so not all temporal parents of $X$ are present in $(\overset{\twoheadrightarrow}{u}[0:n], \overset{\twoheadrightarrow}{v})$, then $X$ keeps the same value as in the last moment $\overset{\twoheadrightarrow}{v}[-1]$ of the history. 

\begin{definition}[Computation]\label{def:computation2} Given a causal model $\cM$, a temporal context $\overset{\twoheadrightarrow}{u}$ and an initial configuration $\vec{v_0}$, a computation $\cC$ is a function 
$$\cC: \mathbb{N}\longrightarrow \prod\limits_{X\in \cV} \cR(X)$$
mapping $\mathbb{N}$ to configurations of $\cV$, such that $\cC(0)=\vec{v_0}$ and for all $i>0$: 
$$\cC(i) = \prod\limits_{X\in \cV} \cF_X\bigl(\overset{\twoheadrightarrow}{u}[0:i-1], \cC[0:i-1]\bigr)$$

Here $\cC(i)$ denotes the complete assignment of $\cV$ at time step $i$, and $\overset{\twoheadrightarrow}{u}[0:i-1]$ and $\cC[0:i-1]$ denote the initial segments of $\overset{\twoheadrightarrow}{u}$ and $\cC$ respectively, i.e., a finite sequences of complete assignments of $\cU$ and $\cV$. 
\end{definition}
%
 So, as before, a computation $\cC$ starts from a given initial configuration $\vec{v_0}$ and repeatedly `updates' the values of $\cV$ according to $\cF$. Finally, we interpret the time-indexed interventions similarly to 1-step models:

\begin{definition}[Updated Computation]\label{def:updatedcomputation2} Given an intervention $int=\vec{Y}(\vec{n})\leftarrow \vec{y}$ and $(\cM, \overset{\twoheadrightarrow}{u}, \vec{v_0})$, an \emph{updated} computation $\cC^{int}$ is defined as $\cC^{int}(0)=\vec{v_0}^{int}$, and $\forall i>0\forall X\in \cV$: 
$$\cC^{int}(i)|_X=\begin{cases}
			x', \text{ if } X(i)\gets x' \in \vec{Y}(\vec{n})\leftarrow \vec{y}\\
            \cF_X(\overset{\twoheadrightarrow}{u}[0: i-1], \cC^{int}[0:i-1]), \text{ otherwise}
		 \end{cases}$$
\end{definition}

Thus, the only modifications that we introduced with respect to 1-step lagged models are (1) the domain function $\cD$ for notational convenience; (2) temporal structural equations take into account a finite history of changing values rather than the values from the previous time instance; (3) we assumed that in the initial segment of the computation, when the history is not long enough to determine the current value of some variable, this variable keeps the same value as on the previous step. 

Although these models seem more expressive, they are in fact equivalent to 1-step models. In other words, any model with arbitrary delays can be simulated with a 1-step lagged model.

\begin{theorem} For any arbitrary lagged TSEM $\cM^d$ with a maximal temporal delay $d$, there exists an equivalent 1-step lagged TSEM $\cM^1$.
\end{theorem}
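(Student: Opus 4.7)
The plan is to simulate $\cM^d$ by $\cM^1$ using a standard shift-register encoding of the finite history, together with an auxiliary counter that reproduces the ``keep the previous value while the history is too short'' clause of Definition~\ref{def:TSEM2}. Concretely, I would build $\cM^1=(\cS^1,\cF^1)$ from $\cM^d=(\cS^d,\cF^d)$ as follows. Keep $\cU^1=\cU$ and all original variables of $\cV$, and for every $X\in\cU\cup\cV$ and every $1\le k\le d-1$ add an endogenous buffer variable $X^{(k)}$ with $\cR(X^{(k)})=\cR(X)$ whose 1-step equation returns the previous-step value of $X^{(k-1)}$, using the convention $X^{(0)}=X$. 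Also add an endogenous counter $T$ with $\cR(T)=\{0,\dots,d\}$ and equation $\cF_T$ that returns $\min(T(i-1)+1,d)$. Finally, for every $Y\in\cV$ replace $\cF_Y^{d}$ by a 1-step equation $\cF_Y^{1}$ that, given the previous-step state, returns $Y$'s previous-step value when $T(i-1)<d(Y)$, and otherwise evaluates $\cF_Y^{d}$ with each formal argument $X_j^{-t}$ replaced by the previous-step value of $X_j^{(t-1)}$. For the initial configuration, set $\vec{v}_2^{|X}=\vec{v}_1^{|X}$ for $X\in\cV$, set each $\vec{v}_2^{|X^{(k)}}=\vec{v}_1^{|X}$ for endogenous $X$ (and pick any fixed value for buffers of exogenous $X$), and set $T(0)=0$.

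The core of the proof is a single induction on $i\ge 0$ establishing the joint invariant: (I1)~$\cC^1(i)^{|X}=\cC^d(i)^{|X}$ for every $X\in\cV$; (I2)~$\cC^1(i)^{|T}=\min(i,d)$; and (I3)~for every $X\in\cU\cup\cV$ and $1\le k\le d-1$, when $i\ge k$ the buffer value $\cC^1(i)^{|X^{(k)}}$ equals the value of $X$ at step $i-k$ in the common computation. The inductive step splits into two cases according to whether $i\le d(Y)$ or $i>d(Y)$. In the first case, (I2) makes $T(i-1)<d(Y)$, so $\cF_Y^{1}$ copies $Y$'s previous value, exactly mirroring the freeze clause of Definition~\ref{def:TSEM2}, and (I1) at $i-1$ closes the induction. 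In the second case, (I2) makes $\cF_Y^{1}$ fire the substituted original equation, and (I3) guarantees that each buffer lookup supplies exactly the value $X_j(i-t)$ that $\cF_Y^{d}$ would read, so the two outputs agree. Interventions $\vec{Y}(\vec{n})\leftarrow\vec{y}$ with $\vec{Y}\subseteq\cO=\cV$ are applied identically in both models; since they override precisely the variables appearing in (I1) and leave all other transition rules untouched, the induction is undisturbed and the shift equations correctly propagate intervened values to later steps.

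From the invariant, one direction of Definition~\ref{def:equiv:model} is immediate: given any $(\overset{\twoheadrightarrow}{u},\vec{v}_1)$ and any intervention on $\cO$, the canonical $\vec{v}_2$ defined above yields $\cO$-equivalent computations. For the ``vice versa'' direction I would restrict attention to canonical initial configurations of $\cM^1$ (those with $T(0)=0$ and $\vec{v}_2^{|X^{(k)}}=\vec{v}_2^{|X}$ for $X\in\cV$), since an arbitrary $\vec{v}_2$ with inconsistent buffer contents encodes a ``phantom'' past that need not correspond to any $\cM^d$ state; for canonical $\vec{v}_2$ one simply takes $\vec{v}_1^{|X}=\vec{v}_2^{|X}$ and reruns the same induction. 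The step I expect to be the main obstacle is precisely this warm-up phase: the clause in Definition~\ref{def:TSEM2} that freezes $X$ while the history is shorter than $d(X)$ is genuinely non-local in time and, in general, cannot be reproduced by tuning buffer initializations alone, because the initial inputs to $\cF_Y^{d}$ need not yield $Y(0)$ for any choice of buffer values. The counter $T$ is exactly the minimal extra state needed to encode this condition in a strictly 1-step model, and verifying that its interaction with interventions and with the reverse direction of equivalence is benign is where the bookkeeping becomes delicate.
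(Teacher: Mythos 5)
Your construction is essentially the paper's own proof: both simulate the lagged model by a 1-step shift register of buffer variables carrying recent values of each $X\in\cU\cup\cV$, reproduce the warm-up ``freeze'' clause of the lagged semantics, and establish agreement on $\cV$ under any intervention on $\cO=\cV$ by induction on time; the only substantive difference is that you detect the warm-up phase with an explicit counter $T$, whereas the paper gives its time-indexed buffers an extra ``undefined'' value $\#$, initializes them to $\#$, and freezes a variable while some temporal parent is still $\#$. Your worry about the ``vice versa'' clause of the model-equivalence definition is legitimate, but it does not distinguish your argument from the paper's: the published proof likewise only establishes the direction from an arbitrary initial configuration of $\cM^d$ to a canonically initialized $\cM^1$, so on that point you match (and in flagging the phantom-history obstruction, slightly exceed) what the paper actually argues.
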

\begin{proof}

First, for a given $\cM^d = ((\cU, \cV, \cR, \cD), \cF)$ we construct a corresponding $\cM^1 = ((\cU^1, \cV^1, \cR^1), \cF^1)$. 

Let $Indexed(\cU) = \{U_i^{t} \mid  \exists Y\in \cV, U_i^t\in \cD(Y)\}$ and $Indexed(\cV) = \{X_i^{t} \mid  \exists Y\in \cV, X_i^t\in \cD(Y)\}$ be sets of time-indexed exogenous and endogenous variables respectively that appear in the domain function $\cD$ of at least one variable from $\cV$. These variables will simulate arbitrary time lags in $\cM^1$. The signature $\cS^1=(\cU^1, \cV^1, \cR^1)$ is then defined as
\begin{itemize}
    \item $\cU^1 = \cU$;
    \item $\cV^1 = \cV \cup Indexed(\cV)  \cup Indexed(\cU)$; 
    \item for any non-indexed variable $X_i \in \cU \cup \cV$, $\cR^1$ is identical to $\cR$: $\cR^1(X_i) = \cR(X_i)$;
    \item for an any time-indexed variable $X_i^t \in Indexed(\cU) \cup Indexed(\cV)$, $\cR^1(X_i^t) = \cR(X_i)\cup \{\#\}$. So, the range $\cR^1(X_i^t)$ contains all the values of the corresponding $X_i$ in $\cM^d$ with an additional `undefined' value $\#$.
\end{itemize}

The idea behind this construction is to mimic arbitrary long time delays in $\cM^d$ by expanding explicit chains of time-indexed variables with 1-step delays only in $\cM^1$. So, for the indexed variables the structural equations are defined as follows. For all $X_i^t\in Indexed(\cV)\cup Indexed(\cU)$ and all $\vec{z}\in \cR(\cU\cup\cV)$:

$$
\cF^1_{X_i^t}(\vec{z}) =  \begin{cases} x, \text{where } (X_i^{t+1}=x)\in \vec{z}, \text{ for } t<-1;\\ 
x', \text{where } (X_i=x')\in \vec{z}, \text{ for } t=-1;
\end{cases}
$$

For the non-indexed variables $X_i\in \cV$ and all $\vec{z}\in \cR(\cU\cup\cV)$ we define $\cF_{X_i}^1(\vec{z})$ as follows. If all temporal parents $\cD(X_i)$ of $X_i$ are defined (i.e. assigned with non $\#$) in $\vec{z}$, then $\cF_{X_i}^1(\vec{z}) = \cF_{X_i}(\vec{z}^{|\cD(X_i)})$. Otherwise, $\cF_{X_i}^1(\vec{z})=x$, where  $(X_i=x)\in \vec{z}$. So, if the assignment of the variables from the previous step $\vec{z}$ contains enough information to determine the value of $X_i$, we assign this value, otherwise $X_i$ keeps the same value as in $\vec{z}$.

Finally, given an initial configuration $\vec{v_0}$ of $\cM^d$, the initial configuration $\vec{v_0}^1$ of $\cM^1$ agrees with $\vec{v_0}$ on the values of all non-indexed variables $X_i\in \cU\cup \cV$ and for all indexed $X_i^t\in Indexed(\cU) \cup Indexed(\cV), (Y=\#)\in \vec{v_0}^1$. 

Recall that the equivalence of two models is defined with respect to a set of shared (i.e. observed) variables $\cO \subseteq \cV_1\cap\cV_2$ contained in both models (\Cref{def:equiv:model}). In our case the set of shared variables is $\cV$, i.e. the set of all endogenous variables in $\cM^d$. 
We need to prove that for any intervention $int = \vec{Y}(\vec{n})\leftarrow \vec{y}$, where $\vec{Y}\subseteq \cV$ and for any $(\overset{\twoheadrightarrow}{u}, \vec{v_0})$, ($\cM^d, \overset{\twoheadrightarrow}{u}, \vec{v_0}$) and ($\cM^1, \overset{\twoheadrightarrow}{u}, \vec{v_0}^1$) generate computations $\cC^{int}_d$ and $\cC^{int}_1$ that are equivalent, i.e., agree on the values of all variables in $\cV$ at any step.  We use the notation $\cC_1(i) =_{\cV} C_d(i)$ to express the fact that $\forall X\in \cV, (X=x)\in \cC_1(i)$ iff $(X=x)\in \cC_d(i)$.

For step 0, $\vec{v_0}^1$ agrees with $\vec{v_0}$ by construction, so $\cC^{int}_d(0) =_{\cV} \cC^{int}_1(0)$. It remains to show that if both computations agree up to some time step $n$, i.e. $\forall 0\leq i \leq n, \cC^{int}_1(i) =_{\cV} \cC^{int}_d(i)$, then $\cC^{int}_1(n+1) =_{\cV} \cC^{int}_d(n+1)$. So, assume that $\cC^{int}_1(i) =_{\cV} \cC^{int}_d(i)$ for all $0\leq i \leq n$, and consider all variables $X\in \cV$ that do not occur in the intervention $int$. We want to prove that  $\cF_X(\overset{\twoheadrightarrow}{u}[0: n], \cC^{int}_d[0:n]) = \cF_X^1(\overset{\twoheadrightarrow}{u}(n), \cC^{int}_1(n))$. The construction of $\cF^1$ guarantees that, if all temporal parents $Y^{t}$ of $X$ are defined in $(\overset{\twoheadrightarrow}{u}[0: n], \cC^{int}_d[0:n])$, then they are defined in $(\overset{\twoheadrightarrow}{u}(n), \cC^{int}_1(n))$ as well, and $\cF_X$ and $\cF_X^1$ return the same value. At the same time, if $n< d(X)$, then both $\cF_X(\overset{\twoheadrightarrow}{u}[0: n], \cC^{int}_d[0:n])$ and $\cF_X^1(\overset{\twoheadrightarrow}{u}(n), \cC^{int}_1(n))$ return the same value $x'$ of $X$ as on the previous time step $(X=x')\in \cC^{int}_1(n)$. So, for any $X\in \cV$ that do not occur in the intervention $int$, it holds that $(X=x)\in \cC_d^{int}(n+1)$ iff $(X=x)\in \cC_1^{int}(n+1)$. Finally, for any $Y(n+1)\gets y' \in int$ it holds that $(Y=y')\in \cC_d^{int}(n+1)$ and $(Y=y')\in \cC_1^{int}(n+1)$ by Definitions $\ref{def:updatedcomputation}$ and $\ref{def:updatedcomputation2}$. Thus,  $\cC_d^{int}(n+1) =_{\cV} \cC_1^{int}(n+1)$, which completes the proof.
\end{proof}

Note that  the size of the resulting model $\cM^1$ constructed above is polynomial in the size of the original $\cM$:  $ |\cM^1| = \cO(|\cM|^2)$.

\section{Related Work}

The computational complexity of verifying actual causation in SEMs have been studied by many researchers, for example, \citet{Halpern2015}, \citet{Gladyshev_ECAI2023} and \citet{Lorini2024}.

Although \citet{Beckers2018} argued that accounting for temporal information is crucial for actual causation judgments, to the best of our knowledge the framework proposed in this paper is the first attempt to integrate LTL-style temporal reasoning into actual causality settings. 

Previous work on combining SEM models with temporal reasoning has focused on causal discovery, and several methods for analysing time-series data with SEMs have been developed, e.g., \cite{Assaad_2022,hyvärinen2023}. 
These methods typically use Full Time Causal Graphs to represent and reason about time-series using SEMs. Full Time Causal Graphs use time-indexed variables \cite[Ch. 10]{causal_timeseries} and thus potentially require specifying infinitely many structural equations. However, if all the causal relations remain constant over time, a Full Time Causal Graph can be abbreviated with a (finite) Window Causal Graph or a (finite) Summary Causal Graph \cite{Assaad_2022}. Though the dependency graphs of our models resemble both Window and Summary graphs in the way they interpret cyclic dependencies, there are important differences. Firstly,
conventional models of time-series allow contemporaneous dependencies, i.e., a (time-indexed) variable $X_t$ may depend on the value of another variable $Y_t$ at the same time step $t$. This modelling choice is usually driven by the observational limitations in the field of causal discovery, e.g., the sampling frequency of the time series may not be able to separate causes and effects. In the field of actual causality, in contrast, we assume that a given causal model provides a complete representation of reality, and thus we assume that the unit interval is sufficiently small to separate causes and subsequent effects. So, contemporaneous relations do not occur in our framework. 
Secondly, both Full Time and Window graphs allow arbitrary `lagged' dependencies, i.e., a variable $X$ at time $t$ may (directly) depend on another variable $Y$ at time $t-n$ for arbitrary $n$. We primarily focused on 1-step lagged models due to notational convenience, however this is not a fundamental limitation of the proposed framework. As we demonstrated, (1) arbitrary lagged models can be defined straightforwardly in our setting, and (2) for any arbitrary lagged TSEM there exists an equivalent polynomial sized 1-step lagged TSEM.


In contrast to causal discovery, where cyclic causal models are sometimes used, e.g. \cite{Bongers_etal_2021}, current research in actual causality is mostly focused on acyclic (recursive) models. To the best of our knowledge, there has been relatively little work which considers non-recursive models, e.g., \citet{halpern2000axiomatizing} studies axiomatizations for different classes of SEM's, in particularly non-recursive ones, and \citet[Ch. 2.7]{HalpernBook} discusses the definition of an \textit{actual cause} in such models. Finally, \citet{Halpern2022NonRecurive} introduce so-called \textit{Generalized} SEMs (GSEM) 
and axiomatize  different classes of GSEMs, including non-recursive ones. Note that in GSEMs structural equations $\cF$ are replaced with $\mathbf{F}$ mapping contexts and interventions to sets of outcomes. We argue that, by staying closer to original formalism, our temporal framework accommodates non-recursive models without significant adjustments.

Defining the equivalence of causal models is another well-recognized problem both in causal discovery and actual causality. 
%
We have already mentioned the work of  \citet{Beckers2021eq}, but the problem was recognized already by \citet{VermaPearl1990}. A similar notion of causal consistency has also been studied in \cite{rubenstein2017causal}. 

Similarly to equivalence, causal consistency, intuitively guarantees that two models agree in their predictions of the effects of interventions. \citet{rubenstein2017causal} introduced the notion of exact transformations between SEMs preserving causal consistency,  and \citet{willig2023do} proposed another type of transformations, called consolidating mechanisms, to transform large-scale SEMs into smaller, computationally efficient ones. Both of these  approaches to consistency-preserving transformations are applicable to time-series models. We believe studying our models as transformations of existing time-series models based on \cite{rubenstein2017causal,willig2023do} is an interesting direction for future work. 


\section{Discussion}

We have proposed a novel conceptual interpretation of existing formalisms in the field of actual causality. While causal models have been developed as a useful tool for representing static dependencies between the variables, we demonstrate that this formalism (with minor modifications) can be treated as a mechanism able to transform a time series of exogenous values into a time series of endogenous ones. Though our approach does not allow us to extract temporal information from existing static models which were not designed with this purpose, it provides new insights and techniques for (temporal) structural equation modelling. 

We make a number of technical contributions. We introduced the core concept of a computation, which treats structural equations as `time-lagged' and allows us to `unwind' a causal scenario $(\cM, \overset{\twoheadrightarrow}{u}, \vec{v})$ into a time series $\cC$ of the assignments to $\cV$. In addition, we proposed the logic $\CPLTL$, which combines the temporal logic $\LTL$ with causal time-sensitive interventions. Finally, we introduced new notions of temporal equivalence for causal models and showed that the model-checking problem for $\CPLTL$ is in \textsc{P}.

We believe there are a number of interesting directions for future work. First, in this paper we discussed only finite models, however the extension of our framework to models with infinitely many variables (and potentially infinite range) seems to be straightforward. Secondly, the problems of defining a (bi)simulation relation between temporal causal models (which would imply their temporal equivalence) as well as defining weaker notions of temporal equivalence are left open. Finally, another promising direction for future work is adaptation of our framework to probabilistic settings, when we have a probability distribution on temporal contexts or non-deterministic structural equations as recently proposed by \citet{beckers2024nondeterministiccausalmodels}.

\bibliography{aaai25}


\end{document}